\def\eqref#1{equation~\ref{#1}}
\def\1{\bm{1}}
\def\ra{{\textnormal{a}}}
\def\rx{{\textnormal{x}}}
\def\rva{{\mathbf{a}}}
\def\erva{{\textnormal{a}}}
\def\ervx{{\textnormal{x}}}
\def\rmA{{\mathbf{A}}}
\def\vmu{{\bm{\mu}}}
\def\vtheta{{\bm{\theta}}}
\def\va{{\bm{a}}}
\def\ve{{\bm{e}}}
\def\vx{{\bm{x}}}
\def\eva{{a}}
\def\mA{{\bm{A}}}
\def\mH{{\bm{H}}}
\def\mI{{\bm{I}}}
\def\mJ{{\bm{J}}}
\def\mX{{\bm{X}}}
\def\mSigma{{\bm{\Sigma}}}
\DeclareMathAlphabet{\mathsfit}{\encodingdefault}{\sfdefault}{m}{sl}
\SetMathAlphabet{\mathsfit}{bold}{\encodingdefault}{\sfdefault}{bx}{n}
\newcommand{\tens}[1]{\bm{\mathsfit{#1}}}
\def\tA{{\tens{A}}}
\def\tX{{\tens{X}}}
\def\gG{{\mathcal{G}}}
\def\sA{{\mathbb{A}}}
\def\sB{{\mathbb{B}}}
\def\sS{{\mathbb{S}}}
\def\emA{{A}}
\newcommand{\etens}[1]{\mathsfit{#1}}
\def\etA{{\etens{A}}}
\newcommand{\E}{\mathbb{E}}
\newcommand{\R}{\mathbb{R}}
\newcommand{\KL}{D_{\mathrm{KL}}}
\newcommand{\Var}{\mathrm{Var}}
\newcommand{\Cov}{\mathrm{Cov}}
\newcommand{\normltwo}{L^2}
\newcommand{\normlp}{L^p}
\newcommand{\parents}{Pa} 
\def\gauss{\mathcal{N}}
\def\ent{\mathbb{H}}
\def\mutinf{\mathbb{I}}
\def\uniform{\mathcal{U}}
\def\P{\mathbb{P}}
\def\cdfmod{f_{\hat{S}_{t+1}}}
\def\envkernel{P_\mathcal{S}}
\def\tskernel{\hat{P}_{\statesp, \mathrm{TS}}}
\def\ipkernel{\tilde{P}_{\statesp, \mathrm{IP}}}
\def\sampsp{\Omega}
\def\statesp{\mathcal{S}}
\def\actsp{\mathcal{A}}
\def\rewsp{\mathcal{R}}
\def\accsp{\mathcal{E}}
\def\statedim{n_{\mathcal{S}}}
\def\rvgenstate{S_t}
\def\genstate{s_t}
\def\rvgenact{A_t}
\def\genact{a_t}
\def\rvgennextstate{S_{t+1}}
\def\rvgenrew{R_{t+1}}
\def\rvenvstate{\check{S}_t}
\def\envstate{\check{s}_t}
\def\rvenvnextstate{\check{S}_{t+1}}
\def\rvmodstate{\hat{S}_t}
\def\modstate{\hat{s}_t}
\def\rvmodnextstate{\hat{S}_{t+1}}
\def\modnextstate{\hat{s}_{t+1}}
\def\rvpropstate{\Tilde{S}_t}
\def\propstate{\Tilde{s}_t}
\def\rvpropnextstate{\Tilde{S}_{t+1}}
\def\propnextstate{\Tilde{s}_{t+1}}
\def\rvegtstate{\bar{S}_t}
\def\egtstate{\bar{s}_t}
\def\rvegtnextstate{\bar{S}_{t+1}}
\def\egtnextstate{\bar{s}_{t+1}}
\def\egtmean{\bar{\mu}}
\def\egtvar{\bar{\Sigma}}
\def\egtchol{\bar{L}}
\def\eepivar{\bar{\Sigma}^{\Delta}}
\def\eepichol{\bar{L}^{\Delta}}
\def\envmean{\mu}
\def\envvar{\Sigma}
\def\envchol{L}
\def\rvenvale{W_t}
\def\envale{w_t}
\def\modmean{\hat{\mu}_{\Theta_t}}
\def\modmeanreal{\hat{\mu}_{\Theta_t = \theta_t^e}}
\def\modvar{\hat{\Sigma}_{\Theta_t}}
\def\modvarreal{\hat{\Sigma}_{\Theta_t = \theta_t^e}}
\def\modcholreal{\hat{L}_{\Theta_t = \theta_t^e}}
\def\modchol{\hat{L}_{\Theta_t}}
\def\rvmodale{W_t}
\def\modale{w_t}
\def\rvmoderror{\Delta_t}
\def\moderrordist{\mathbb{P}_{\Delta}}
\def\modbias{\mu^{\Delta}}
\def\epivar{\Sigma^{\Delta}}
\def\epichol{L^{\Delta}}
\def\rvmodepi{N_t}
\def\modepi{n_t}
\def\propmean{\Tilde{\mu}}
\def\propvar{\Tilde{\Sigma}}
\def\propchol{\Tilde{L}}
\def\rvcondun{U_t}
\def\condun{u_t}
\def\kalgain{K}
\def\rvparams{\Theta_t}
\def\params{\theta_t}
\def\paramsp{\vartheta}
\def\distparams{\mathbb{P}_{\Theta}}
\def\threshsingle{\lambda_1}
\def\threshmulti{\lambda_2}
\def\buffenv{\mathcal{D}_{\mathrm{env}}}
\def\buffmod{\mathcal{D}_{\mathrm{mod}}}
\def\roll{Infoprop }
\def\algo{Infoprop-Dyna }
\newtheorem{assumption}{Assumption}
\newtheorem{definition}{Definition}
\newtheorem{lemma}{Lemma}
\newtheorem{theorem}{Theorem}
\newcommand*\samethanks[1][\value{footnote}]{\footnotemark[#1]}
\renewcommand{\eqref}[1]{(\ref{#1})}
\newcommand\myeq{\stackrel{\mathclap{\normalfont\mbox{\tiny{dist}}}}{=}}
\title{On Rollouts in Model-Based Reinforcement Learning}
\author{Bernd Frauenknecht\thanks{Equal Contribution}, Devdutt Subhasish\samethanks, Friedrich Solowjow, and Sebastian Trimpe
 \\
Institute for Data Science in Mechanical Engineering\\
RWTH Aachen University\\
Aachen, 52062, Germany \\
\texttt{firstname.lastname@dsme.rwth-aachen.de} \\
}
\begin{document}

\maketitle

\begin{abstract}
Model-based reinforcement learning (MBRL) seeks to enhance data efficiency by learning a model of the environment and generating synthetic rollouts from it. 
However, accumulated model errors during these rollouts can distort the data distribution, negatively impacting policy learning and hindering long-term planning. 
Thus, the accumulation of model errors is a key bottleneck in current MBRL methods.
We propose \emph{Infoprop}, a model-based rollout mechanism that separates aleatoric from epistemic model uncertainty and reduces the influence of the latter on the data distribution. Further, Infoprop keeps track of accumulated model errors along a model rollout and provides termination criteria to limit data corruption.
 We demonstrate the capabilities of Infoprop in the \emph{Infoprop-Dyna} algorithm, reporting state-of-the-art performance in Dyna-style MBRL on common MuJoCo benchmark tasks while substantially increasing rollout length and data quality.

\end{abstract}

\section{Introduction}

Reinforcement learning (RL) has emerged as a powerful framework for solving complex decision-making tasks like racing \cite{Vasco2024Jun, Kaufmann2023Aug} and gameplay \cite{openai2019dota, Bi2024}. 
However, when applying RL in real-world scenarios, a significant challenge is data inefficiency, which hinders the practicality of standard RL methods. 
Model-based reinforcement learning (MBRL) addresses this issue by learning an internal model of the environment \cite{Deisenroth2011Jun, Chua2018, Janner2019Dec, Hafner2020Apr}. 
By generating simulated interactions through model rollouts, MBRL can make informed decisions while substantially reducing the need for real-world data collection.



 
The quality of data from model-based rollouts is critical for MBRL performance. Model errors can distort the data distribution and hurt policy learning.
Long-horizon planning is desirable, however, often infeasible as model errors accumulate over time.
This effect is demonstrated in Figure \ref{fig:prob_statement_ts}. Even for a simple toy example (described in Appendix $\ref{app:toy_exmple}$), we see the data distribution of model-based rollouts under the state-of-the-art Trajectory Sampling (TS) \cite{Chua2018} scheme diverging quickly from the ground truth distribution of environment rollouts. Thus, data from TS rollouts can even be harmful to policy learning after a couple of time steps. This is largely because the TS mechanism does not explicitly address the effect of model errors on the propagated data distribution.

To tackle this challenge, we propose \emph{\roll}, a novel model-based rollout mechanism that mitigates data distortion by addressing two key questions: \emph{How to propagate?} and \emph{When to stop?} 
We build our mechanism on explicitly leveraging the ability of common MBRL models to distinguish between aleatoric uncertainty due to process noise and epistemic uncertainty due to lack of data \cite{Lakshminarayanan2017Dec, Becker2022Oct}. Making use of this property leads to substantially improved data consistency as depicted in Figure \ref{fig:prob_statement_ts}. In particular, we
\addtolength{\leftmargini}{-3mm}
\begin{compactitem}
    \item estimate and remove the stochasticity due to model error from the predictive distribution;
    \item formulate stopping criteria based on information loss to limit error accumulation; and
    \item demonstrate the potential of \roll as a direct plugin to standard MBRL methods using the example of Dyna-style MBRL. The resulting \emph{\algo} algorithm yields state-of-the-art performance in MBRL on common MuJoCo tasks, while substantially improving the data consistency of model-based rollouts and thus allowing for longer rollout horizons.
\end{compactitem}

\begin{figure}[tb]
     \centering
         \includegraphics[width=\textwidth]{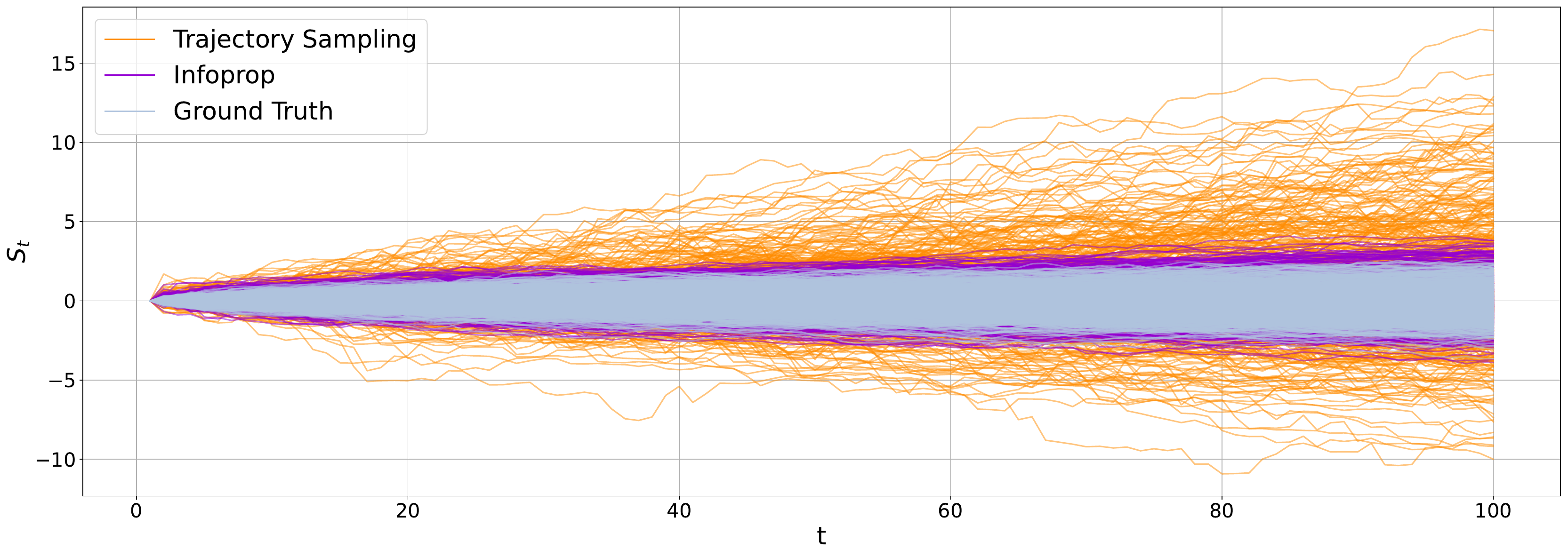}
         \caption{Comparing Data Consistency of Model-based Rollouts. \textit{Trajectories under the proposed \roll mechanism follow the ground truth distribution of environment rollouts closely while rolling out the same model under the common TS scheme \cite{Chua2018} results in distorted data.}}
         \label{fig:prob_statement_ts}
 \end{figure}
 \vspace{-3mm}
\section{Background}
In the following, we introduce the fundamental concepts of information theory and MBRL. Appendix \ref{sec:app_notation} provides an overview of the notation introduced and used in the remainder of the paper.

\subsection{Information Theory}
We will estimate the degree of data corruption in \roll rollouts using information-theoretic arguments.
Information theory serves to quantify the uncertainty of a random variable (RV) \cite{Shannon1948Jul}. 
Given the discrete RVs $X:\sampsp \rightarrow \mathcal{X}$ and $Y:\sampsp \rightarrow \mathcal{Y}$, the marginal entropy $\ent(X) = -\sum_{x\in \mathcal X}\mathbb P[X=x]\log_2(\mathbb P[X=x])$ describes the average uncertainty about $X$ in bits. Further, the conditional entropy $\ent(X|Y=y) = -\sum_{x\in \mathcal X}\mathbb P[X|Y=y]\log_2(P[X|Y=y])$ gives the uncertainty about $X$, given a realization of $Y$. Based on marginal and conditional entropy, the reduction in uncertainty about $X$ given a realization of $Y$ is described by mutual information
\begin{equation}
    \mutinf(X; Y=y) = \ent(X) - \ent(X|Y=y),
    \label{eq:mutual_info}
\end{equation}
 with $I(X; Y=y) = 0$ if the RVs are independent. In the following, we focus on Gaussian RVs and use the notion of quantized entropy \cite{Thomas2006Jan} with details provided in Appendix \ref{sec:app_quan_ent}.

\subsection{Reinforcement Learning}
Reinforcement learning addresses sequential decision-making problems where the environment is typically modeled as a discrete-time Markov decision process (MDP) represented by the tuple $\mathcal{M} = \{ \statesp, \actsp, \rewsp, P_{\rewsp}, P_{\statesp}, \xi_0, \gamma \}$. Here, $\mathcal{S} \subseteq \mathbb{R}^{n_{\mathcal{S}}}$ denotes the state space with $\rvgenstate \in \statesp$ being the RV of the state at time $t$ and $\genstate$ its realization. Similarly, $\actsp \subseteq \mathbb{R}^{n_{\mathcal{A}}}$ represents the action space with $\rvgenact \in \actsp$ the RV and $\genact$ the realization of the action as well as $\rewsp \subseteq \mathbb{R}$ the set of rewards with $R_t \in \rewsp$ and $r_t$ the reward at time $t$. 
We make the common simplifying assumption \cite{bdr2023} that the next state and reward are independent given the current state-action pair. Thus, a transition step in the environment can be expressed concerning a reward kernel $P_{\mathcal{R}}$ and a dynamics kernel $P_{\mathcal{S}}$ as
\begin{equation}
        \rvgenrew  \sim P_{\rewsp}(\cdot | \rvgenstate, \rvgenact) \quad \text{and} \quad
        \rvgennextstate  \sim P_{\statesp}(\cdot | \rvgenstate, \rvgenact).
\end{equation}
Further, initial states are distributed according to $S_0 \sim \xi_0$, and actions according to the policy $\rvgenact \sim \pi( \cdot | \rvgenstate)$.
We aim to learn an optimal policy $\pi^* = \arg \max_{\pi} \E_{\pi} \left[\sum_{t=0}^{\infty} \gamma^t \rvgenrew \right]$ that maximizes the expected sum of rewards discounted by $\gamma \in [0, 1)$, referred to as return.
\subsection{Model-based Reinforcement Learning}
There are four main categories of MBRL that all build on model-based rollouts. \emph{(i)} Dyna-style methods \cite{Sutton1991Jul, Janner2019Dec} use model-based rollouts to generate training data for a model-free agent. \emph{(ii)} Model-based planning approaches \cite{Chua2018, Williams2017May, Nagabandi2018, Hafner2019May} do not learn an explicit policy but perform planning via model rollouts during deployment. \emph{(iii)}  Analytic gradient methods \cite{Deisenroth2011Jun, Hafner2020Apr, Hafner2021, Hafner2023Jan} optimize the policy by backpropagating the performance gradient through model rollouts. \emph{(iv)}  Value-expansion approaches \cite{Feinberg2018Feb, Buckman2018Dec} stabilize the temporal difference target using model-based rollouts. 

The model architecture of an MBRL algorithm determines the set of mechanisms for model rollouts. In this work, we focus on rolling out the particularly successful class of aleatoric epistemic separator (AES) models \cite{Lakshminarayanan2017Dec, Becker2022Oct} that can distinguish aleatoric uncertainty corresponding to the estimate of process noise from epistemic uncertainty.

\subsection{Environment Interaction vs. Model-based Rollouts}
Model-based rollouts aim to substitute environment interaction in MBRL. Thus, we compare the data generation process through environment interaction to the process of model-based rollouts.

We model environment dynamics as a nonlinear function $\envmean(\rvgenstate, \rvgenact)$ with additive heteroscedastic process noise that is normally distributed with variance $\envvar(\rvgenstate, \rvgenact)$. Thus, environment rollouts, as depicted in Figure \ref{fig:prob_statement_ts}, are generated by iterating the dynamics
\begin{equation}
    \rvgennextstate = \envmean(\rvgenstate, \rvgenact) + \envchol(\rvgenstate, \rvgenact)\rvenvale, 
    \label{eq:env_prediction}
\end{equation}
with $\envchol(\rvgenstate, \rvgenact)\envchol(\rvgenstate, \rvgenact)^\top = \envvar(\rvgenstate, \rvgenact)$ and the process noise $\rvenvale\sim \gauss(0, I)$. Consequently, the transition kernel
\footnote{As $P_{\rewsp}$ typically is a known deterministic function in the context of MBRL, while $P_{\statesp}$ is the unknown object we aim to model, the discussion henceforth focuses on approximating $P_{\statesp}$ without loss of generality.}
of the environment is defined as $P_{\statesp}\left( \cdot | \rvgenstate, \rvgenact \right) = \gauss\left( \envmean(\rvgenstate, \rvgenact), \envvar(\rvgenstate, \rvgenact)\right)$. 

In MBRL, however, we do not have access to $\envkernel$ directly but typically rely on a parametric model with the random parameters $\rvparams \in \paramsp$. Besides estimates of nonlinear dynamics $\modmean(\rvgenstate, \rvgenact)$ and process noise $\modvar(\rvgenstate, \rvgenact)$, AES models provide an estimate of the parameter distribution $\rvparams \sim \distparams$, e.g. via ensembling \cite{Lakshminarayanan2017Dec} or dropout \cite{Becker2022Oct}. These models are typically propagated using the TS \cite{Chua2018} rollout mechanism via iterating
\begin{equation}
    \rvgennextstate = \modmean(\rvgenstate, \rvgenact) + \modchol(\rvgenstate, \rvgenact)\rvmodale 
    \label{eq:TS_prediction}
\end{equation}
with $\modchol(\rvgenstate, \rvgenact)\modchol(\rvgenstate, \rvgenact)^\top = \modvar(\rvgenstate, \rvgenact)$, $\rvmodale \sim \gauss(0, I)$, and $\rvparams \sim \distparams$. This results in the TS rollouts in Figure \ref{fig:prob_statement_ts} and induces the kernel $\tskernel (\cdot | \rvgenstate, \rvgenact) = \gauss\left(\modmean(\rvgenstate, \rvgenact), \modvar (\rvgenstate, \rvgenact)\right)$. The majority of recent MBRL approaches use the TS rollout mechanism, e.g. \cite{Chua2018, Becker2022Oct, Janner2019Dec, Pan2020Dec, Yu2020, Luis2023Dec}. Pseudocode is provided in Algorithm \ref{alg:trajectory_sampling} of Appendix \ref{app_pseudocode}.

\section{Problem Statement}
\label{sec:problem_statement}
Revisiting Figure \ref{fig:prob_statement_ts} allows us to illustrate the effects of different sources of stochasticity by comparing environment interaction under $\envkernel$ to TS rollouts under $\tskernel$. While different realizations of process noise $\envale \sim \gauss(0, I)$ allow for keeping track of the environment distribution, the sampling process $\params \sim \distparams$ introduces additional stochasticity that leads to an overestimated total variance in the TS rollout distribution.
This effect is amplified through the continued propagation of erroneous predictions making data at later steps unfit for policy learning. We ask the following questions:
\begin{compactenum}
    \item[(i)] How can we construct a predictive distribution closely resembling environment dynamics? \label{item:ps_pred_dist}
    \item[(ii)] How can we quantify the degree of data corruption due to model error? \label{item:ps_data_corr}
    \item[(iii)] When should model-based rollouts be terminated due to data corruption? \label{item:ps_stopping}
\end{compactenum}
We address these questions by proposing the \emph{\roll} rollout mechanism. \roll isolates and removes epistemic uncertainty for an improved predictive distribution, keeps track of data corruption using information-theoretic arguments, and terminates rollouts based on the degree of corruption.

\section{Infoprop Rollout Mechanism}

\begin{minipage}{0.4\textwidth}
    In the following, we introduce the \roll mechanism for model-based rollouts. As depicted in Figure \ref{fig:ip_schematic}, we decompose model predictions into a signal fraction representing the environment dynamics and noise fraction introduced by model error. This perspective allows to interpret model rollouts as communication through a noisy channel. We estimate both the signal and the noise distribution and use these to infer a belief over the environment state, given an observation of the model state. This belief state represents the foundation of the \roll rollout mechanism.
\end{minipage}%
\hfill 
\begin{minipage}{0.56\textwidth}
\begin{figure}[H] 
        \centering
        \includegraphics[width=\linewidth]{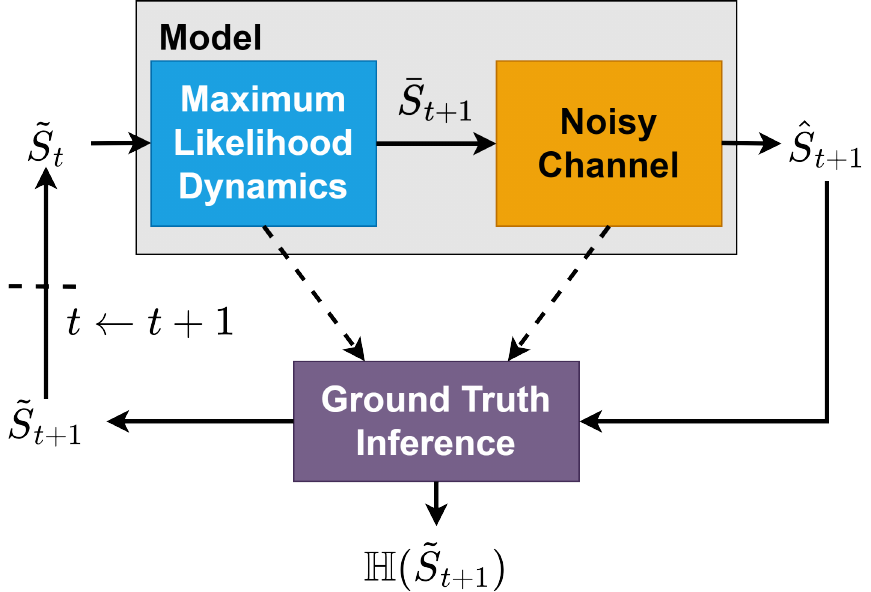} 
        \caption{\roll block diagram}
        \label{fig:ip_schematic}
    \end{figure}
\end{minipage}

\subsection{Theoretical Setup}
\label{sec:ip_setup}
First, we introduce additional notation to specify RVs under different transition kernels. 
\begin{definition}[Environment state] We define the environment state as the conditional expectation under the environment dynamics given a realization of a state-action pair
\begin{equation}
    \rvenvnextstate := \E_{\envkernel} \left[ \rvgennextstate | \rvgenstate=\genstate, \rvgenact=\genact, \rvenvale \right].
\end{equation}
\end{definition}
\vspace{-2mm}
Thus, $\rvenvnextstate$ is an RV, where the randomness is induced by the process noise and has an aleatoric nature. If we additionally condition on the realization $\rvenvale=\envale$, we obtain a deterministic object.
\begin{definition}[Model state]
We define the model state as the conditional expectation under $\tskernel$
\begin{equation}
    \rvmodnextstate := \E_{\tskernel} \left[ \rvgennextstate | \rvgenstate=\genstate, \rvgenact=\genact, \rvmodale, \rvparams \right].
    \label{eq:def_mod_state}
\end{equation}
\end{definition}
\vspace{-2mm}
As discussed in Section \ref{sec:problem_statement}, stochasticity in $\rvmodnextstate$ is induced not only by $\rvmodale$ but also by the randomness in the parameters $\rvparams$. We project the uncertainty in the parameter space $\paramsp$ to $\statesp$ via an error process.
\begin{definition}[Model error process]
We define a model error process
\begin{equation}
    \rvmoderror := \rvmodnextstate - \rvenvnextstate
    \label{eq:def_error_proc_2}
\end{equation}
that, given a realization of process noise $\rvenvale=\envale$, projects uncertainty in $\paramsp$ to $\statesp$
\begin{equation}
    \E\left[\rvmoderror | \rvenvale=\envale \right] = \E_{\tskernel}\left[ \rvgennextstate | \genstate, \genact, \modale, \rvparams \right] - \E_{\envkernel} \left[ \rvgennextstate | \genstate, \genact, \envale \right].
    \label{eq:def_error_proc}
\end{equation}
 We refer to the projected parameter uncertainty as epistemic uncertainty.
\end{definition}
\vspace{-2mm}
Further, we restrict model usage to a sufficiently accurate subset $\accsp \subseteq \statesp \times \actsp$, as proposed in \cite{Frauenknecht2024May}. We define $\accsp$ amenable to the \roll setting in Section \ref{sec:ip_stopping_roll} and make the following assumptions when performing model-based rollouts in $\accsp$:

\begin{assumption}[Consistent estimator of aleatoric uncertainty]
\label{assump:var_consistent}
The model's predictive variance $\modvar$ is a
    consistent estimator of $\envvar$ following the definition of \cite{Julier01}, i.e. 
    \begin{equation}
        \left( \modvar(\rvgenstate, \rvgenact) - \envvar(\rvgenstate, \rvgenact)\right) \succcurlyeq 0 \quad \forall (\rvgenstate, \rvgenact) \in \accsp.
        \label{eq_consistent_variance}
    \end{equation}
\end{assumption}
\vspace{-2mm}
\begin{assumption}[Unbiased estimator]
\label{assump:unbiased}
    The model bias $\modbias$ is negligible. Thus $\rvmodnextstate$ according to \eqref{eq:TS_prediction} is an unbiased estimator of $\rvenvnextstate$ according to \eqref{eq:env_prediction}, i.e.
    \begin{equation}
        \E [ \rvmodnextstate | \rvgenstate, \rvgenact ] = \E \left[ \rvenvnextstate | \rvgenstate, \rvgenact \right]  \quad \forall (\rvgenstate, \rvgenact) \in \accsp.
        \label{eq_unbiased}
    \end{equation}
\end{assumption}
\vspace{-2mm}
Figure \ref{fig:prob_statement_ts} empirically shows that these assumptions are reasonable. The \roll distribution is slightly more stochastic than the ground truth process, which indicates that Assumption \ref{assump:var_consistent} holds. As \eqref{eq_consistent_variance} states, the model does not underestimate aleatoric uncertainty; the \roll rollouts should be at least as stochastic as the true process. Further, we observe no substantial bias of the \roll distribution underscoring the soundness of Assumption \ref{assump:unbiased}. \roll shows a similar behavior in high dimensional problems as reported in Section \ref{sec:experiments}.

\subsection{Decomposing the model state in signal and noise}
\label{sec:ip_estimate_gt}
We aim to isolate the stochasticity due to parameter uncertainty in $\rvmodnextstate$. We use the model error process \eqref{eq:def_error_proc} to project the noise in $\paramsp$ to the same space as the signal, i.e. the dynamics, which is $\statesp$.
The parameter distribution $\rvparams \sim \distparams$ can induce arbitrarily complex distributions $\rvmoderror \sim \moderrordist$. To simplify the analysis, we solely consider the first two moments of $\moderrordist$, namely $\modbias$ and $\epivar$.
This allows to reformulate the propagation equation \eqref{eq:TS_prediction} of the model state 
\begin{equation}
        \rvmodnextstate = \rvenvnextstate + \rvmoderror \approx \rvenvnextstate + \modbias(\rvgenstate, \rvgenact) + \epichol(\rvgenstate, \rvgenact) \rvmodepi
        \label{eq:mod_prop_error_proc}
    \end{equation}
    concerning $\rvenvnextstate$ and the model error $\rvmoderror$ represented by $\modbias(\rvgenstate, \rvgenact)$ the model bias, $\epivar(\rvgenstate, \rvgenact)$ the epistemic variance with Cholesky decomposition $\epichol(\rvgenstate, \rvgenact)$, and $\rvmodepi$ the epistemic noise.

By Assumption \ref{assump:unbiased}, we have $\modbias(\rvgenstate, \rvgenact) = 0 \quad \forall (\rvgenstate, \rvgenact) \in \accsp$. Consequently, we can interpret the model rollout as communication through a Gaussian noise channel \cite{Thomas2006Jan} via \eqref{eq:mod_prop_error_proc}.

Based on the propagation equation \eqref{eq:mod_prop_error_proc}, we aim to infer the maximum likelihood estimate of $\rvenvnextstate$ from $E$ realizations of $\{\E[ \rvmodnextstate | \rvmodepi=\modepi^e ]\}_{e=1}^E$, to use it as the predictive distribution for our rollout scheme. As we cannot sample $\rvmodepi$ directly, we instead use an equivalent definition of $\rvmodnextstate$.
\begin{definition}[Model state concerning epistemic uncertainty]
Based on the model error process \eqref{eq:def_error_proc} the model state is defined as
\begin{equation}
    \rvmodnextstate = \E_{\tskernel} \left[ \rvgennextstate | \rvgenstate=\genstate, \rvgenact = \genact, \rvmodale, \rvmoderror \right] \approx \E_{\tskernel} \left[ \rvgennextstate | \rvgenstate=\genstate, \rvgenact = \genact, \rvmodale, \rvmodepi \right]
    \label{eq:redef_mod_state}
\end{equation}
\end{definition}
\vspace{-2mm}
 Reformulating \eqref{eq:def_mod_state} concerning $\rvmoderror$ does not change the information content or the induced sigma-algebra, as $\rvmoderror$ is a measurable function of $\rvparams$. In the simplified setting of solely considering the first two moments of $\moderrordist$, $\rvmodepi$ fully describes stochasticity due to model error. In reverse, we can obtain realizations $\{ \E [ \rvmodnextstate | \rvparams=\params^e ]\}_{e=1}^E$ and interpret them as samples $\{\E[ \rvmodnextstate | \rvmodepi=\modepi^e ]\}_{e=1}^E$.
\begin{lemma}
Given $E$ realizations of $\E \left[ \rvmodnextstate | \rvparams=\params^e \right]$, we can estimate the environment state using maximum likelihood as
\begin{equation}
    \rvenvnextstate = \E \left[ \rvmodnextstate | \rvmodepi = 0 \right] \approx \rvegtnextstate = \egtmean(\rvgenstate, \rvgenact) + \egtchol(\rvgenstate, \rvgenact)\rvenvale
    \label{eq:egtstate}
\end{equation}
\vspace{-7mm}
\begin{proof}
    see Appendix \ref{proof_lemma_ml_dist}
\end{proof}
\vspace{-2mm}
\label{lemma_ml_dist}
\end{lemma}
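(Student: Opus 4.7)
The plan is to invert the Gaussian noise-channel representation from \eqref{eq:mod_prop_error_proc} and argue that the maximum-likelihood predictive distribution is obtained by zeroing out the epistemic noise while retaining the aleatoric component.

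First, I would combine \eqref{eq:mod_prop_error_proc} with Assumption \ref{assump:unbiased} to write, for $(\rvgenstate,\rvgenact)\in\accsp$,
\[
    \rvmodnextstate \;\approx\; \rvenvnextstate + \epichol(\rvgenstate,\rvgenact)\,\rvmodepi,
\]
with $\rvmodepi$ zero-mean Gaussian. This immediately yields $\E[\rvmodnextstate\mid\rvmodepi=0]=\rvenvnextstate$, which is the left equality in \eqref{eq:egtstate} and reduces the task to estimating the conditional expectation of $\rvmodnextstate$ at $\rvmodepi=0$ from the ensemble samples.

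Next, since $\rvmodepi$ is not directly observable, I would use the reformulation \eqref{eq:redef_mod_state}: because $\rvmoderror$, and hence $\rvmodepi$, is a measurable function of $\rvparams$, each parameter realization $\params^e$ induces a fixed $\modepi^e$, so the ensemble samples $\E[\rvmodnextstate\mid\rvparams=\params^e] = \modmean(\rvgenstate,\rvgenact)\big|_{\params^e}$ may be treated as samples $\E[\rvmodnextstate\mid\rvmodepi=\modepi^e]$. Writing the Gaussian likelihood of these samples under the channel model, the stationary point with respect to the unknown mean is the ensemble average, while under Assumption \ref{assump:var_consistent} the aleatoric covariance is consistently estimated by averaging the per-member predictive covariances. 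This yields
\[
    \egtmean = \frac{1}{E}\sum_{e=1}^{E} \modmean(\rvgenstate,\rvgenact)\big|_{\params^e}, \qquad \egtvar = \frac{1}{E}\sum_{e=1}^{E} \modvar(\rvgenstate,\rvgenact)\big|_{\params^e},
\]
and taking the Cholesky factor $\egtchol$ of $\egtvar$ and injecting aleatoric noise $\rvenvale\sim\gauss(0,I)$ gives the approximate form $\rvegtnextstate = \egtmean + \egtchol\,\rvenvale$ on the right-hand side of \eqref{eq:egtstate}.

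The main obstacle I expect is cleanly separating the epistemic contribution from the aleatoric contribution inside the likelihood, so that setting $\rvmodepi = 0$ emerges as the MLE rather than as an intuitive shortcut. I would handle this by writing the log-likelihood of the ensemble samples as a quadratic form whose covariance is governed by $\epivar$ (the spread across ensemble members), while each per-member covariance $\modvar|_{\params^e}$ carries the aleatoric information about $\envvar$. The resulting optimizer has mean equal to the ensemble average and aleatoric covariance equal to the ensemble-averaged $\modvar$, which is precisely the decomposition implied by the noisy-channel picture together with Assumptions \ref{assump:var_consistent} and \ref{assump:unbiased}.
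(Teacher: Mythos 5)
Your opening step (using \eqref{eq:mod_prop_error_proc} and Assumption \ref{assump:unbiased} to get $\E[\rvmodnextstate\mid\rvmodepi=0]=\rvenvnextstate$, and re-indexing the ensemble samples from $\rvparams=\params^e$ to $\rvmodepi=\modepi^e$) matches the paper's setup. The divergence --- and the gap --- is in the actual maximum-likelihood computation. The paper does not treat the $E$ ensemble predictions as independent observations: it forms their joint Gaussian $\hat S\sim\gauss(\hat\mu,\hat\Sigma)$ \emph{including the unknown cross-covariances} $\hat\Sigma^{ef}$ between members, poses the ML problem through the stacked observation map $H=[I,\dots,I]^\top$, and obtains the generalized-least-squares solution $\rvegtnextstate=\left(H^\top\hat\Sigma^{-1}H\right)^{-1}H^\top\hat\Sigma^{-1}\hat S$, i.e.\ \eqref{eq:a_mubar} and \eqref{eq:a_sigmabar}. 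Because the $\hat\Sigma^{ef}$ are unknown in practice, it then replaces this Kalman-fusion result by covariance-intersection fusion with uniform weights --- which is exactly where Assumption \ref{assump:var_consistent} enters --- giving $\egtvar=\bigl(\frac{1}{E}\sum_{e}(\hat\Sigma^e)^{-1}\bigr)^{-1}$ and the precision-weighted mean $\egtmean=\egtvar\bigl(\frac{1}{E}\sum_{e}(\hat\Sigma^e)^{-1}\hat\mu^e\bigr)$, equations \eqref{eq:ci_variance} and \eqref{eq:ci_mean}.

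Your answer --- the unweighted average of the $\hat\mu^e$ and the arithmetic average of the $\hat\Sigma^e$ --- coincides with this only when all per-member covariances are equal. Two concrete problems follow. First, your covariance step is not a maximum-likelihood derivation: if you model the $\hat\mu^e$ as i.i.d.\ draws with covariance $\epivar$ around the unknown mean, the likelihood you wrote down determines the mean but is silent about the aleatoric covariance, so ``average the per-member predictive covariances'' is an assertion rather than a stationary point of anything; moreover Assumption \ref{assump:var_consistent} says each member does not underestimate $\envvar$, which does not by itself make the arithmetic average the ML (or the paper's) estimate. Second, the value of $\egtmean$ is consumed downstream: Lemma \ref{lemma_ml_epist} defines $\eepivar$ as the scatter of the $\hat\mu^e$ around $\egtmean$, and the Kalman update in Appendix \ref{infoprop_transition} combines $\egtvar$ and $\eepivar$, so producing a different pair $(\egtmean,\egtvar)$ changes the \roll state itself. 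To close the gap you would need either to justify treating the ensemble members as uncorrelated (which the paper deliberately avoids) or to adopt the covariance-intersection step that converts the intractable joint ML problem into the estimator actually stated.
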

\begin{lemma}
Following this line of thought, the maximum likelihood estimate of $\epivar$ is given by
 \begin{equation}
      \eepivar(\rvgenstate, \rvgenact) = \frac{1}{E} \sum_{e=1}^E \left( \modmeanreal(\rvgenstate, \rvgenact) - \egtmean (\rvgenstate, \rvgenact) \right) \left( \modmeanreal(\rvgenstate, \rvgenact) - \egtmean (\rvgenstate, \rvgenact) \right)^\top.
     \label{eq: epi_var_estimate}
 \end{equation}
 \vspace{-7mm}
\begin{proof}
    see Appendix \ref{proof_lemma_ml_epist}
\end{proof}
\vspace{-3mm}
\label{lemma_ml_epist}
\end{lemma}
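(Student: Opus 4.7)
The plan is to leverage Lemma 1 to reduce the claim to a textbook maximum likelihood computation for the covariance of a Gaussian with known mean. Concretely, I would proceed in three steps.

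First, I would rewrite the propagation equation \eqref{eq:mod_prop_error_proc} under Assumption \ref{assump:unbiased}, so that $\modbias = 0$ and
$$\rvmodnextstate - \rvenvnextstate \approx \epichol(\rvgenstate, \rvgenact)\rvmodepi, \qquad \rvmodepi \sim \gauss(0, I).$$
Conditioning on the current state--action pair and on $\rvenvale$ (so $\rvenvnextstate$ is held fixed), the conditional mean $\modmeanreal(\rvgenstate,\rvgenact) = \E[\rvmodnextstate \mid \rvparams = \params^e]$ associated with a draw $\params^e$ therefore plays the role of a realization of the random variable $\rvenvnextstate + \epichol \rvmodepi$. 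Taking the fixed (by Lemma 1, maximum likelihood) estimate $\egtmean$ as the mean of $\rvenvnextstate$, the residuals $\{\modmeanreal(\rvgenstate,\rvgenact) - \egtmean(\rvgenstate,\rvgenact)\}_{e=1}^{E}$ are i.i.d.\ samples from $\gauss\bigl(0, \epivar(\rvgenstate,\rvgenact)\bigr)$.

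Second, I would write down the Gaussian log-likelihood for these residuals. Letting $d_e := \modmeanreal(\rvgenstate,\rvgenact) - \egtmean(\rvgenstate,\rvgenact)$, the log-likelihood as a function of the covariance $\Sigma = \epivar$ is, up to an additive constant,
$$\ell(\Sigma) = -\tfrac{1}{2}\sum_{e=1}^{E}\bigl( \log \det \Sigma + d_e^\top \Sigma^{-1} d_e \bigr).$$
Setting $\nabla_{\Sigma}\ell = 0$ and using the standard identities $\partial \log\det\Sigma/\partial\Sigma = \Sigma^{-1}$ and $\partial (d^\top\Sigma^{-1} d)/\partial\Sigma = -\Sigma^{-1} d d^\top \Sigma^{-1}$ yields
$$E\,\Sigma^{-1} = \Sigma^{-1}\Bigl(\textstyle\sum_{e=1}^{E} d_e d_e^\top\Bigr)\Sigma^{-1},$$
which rearranges to the claimed empirical estimator $\eepivar = \tfrac{1}{E}\sum_e d_e d_e^\top$.

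Third, I would briefly note positive semi-definiteness of the result (immediate from the sum of outer products) so that it is a legitimate covariance, and that the $1/E$ normalization — as opposed to $1/(E-1)$ — is exactly the MLE form, consistent with the maximum-likelihood framing already used in Lemma 1. The main (and essentially only) obstacle is the conceptual one in the first step: justifying that the $E$ parameter draws can be treated as $E$ independent samples of a $\gauss(0,\epivar)$ random variable once one has projected $\rvparams$ through the error process into $\statesp$ and invoked the two-moment simplification of $\moderrordist$. Once that reduction is in place, the remainder is a standard covariance-MLE calculation.
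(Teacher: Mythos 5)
Your proposal is correct and follows essentially the same route as the paper: the appendix proof likewise interprets the ensemble means $\{\hat{\mu}^e\}_{e=1}^E$ as i.i.d.\ samples from a distribution whose mean is known to be $\egtmean$ and then invokes the standard known-mean Gaussian covariance MLE (with the $1/E$ normalization), which the paper declares ``trivial'' where you spell out the log-likelihood and matrix-derivative computation. The conceptual reduction you flag as the main obstacle --- treating the $E$ parameter draws, projected through the error process under the two-moment simplification, as samples of $\gauss(0,\epivar)$ --- is exactly the (equally unjustified-in-detail) interpretive step the paper takes.
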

Given the maximum likelihood estimates of the environment state $\rvegtnextstate$ and the epistemic variance $\eepivar$, we can decompose the model state $\rvmodnextstate$ in a signal and noise fraction according to \eqref{eq:mod_prop_error_proc} in $\accsp$.
\subsection{Constructing the \roll state}
\label{sec:ip_uncertainty_acc}
Having decomposed $\rvmodnextstate$ into signal $\rvegtnextstate$ and noise $\eepivar$, allows us to define the \roll state.
\begin{definition}[\roll state]
We define the \roll state
\begin{equation}
    \rvpropnextstate := \E \left[ \rvegtnextstate | \rvmodnextstate = \modnextstate \right] = \E_{\ipkernel} \left[ \rvgennextstate | \rvgenstate=\genstate, \rvgenact = \genact, \rvmodnextstate=\modnextstate, \rvcondun\right]
    \label{eq:def_ip_state}
\end{equation}
as the conditional expectation of the estimated environment state given a sample of the model state.
We derive the corresponding \roll kernel $\ipkernel(\cdot | \rvgenstate, \rvgenact, \rvmodnextstate) = \gauss\left(\propmean ( \rvgenstate, \rvgenact, \rvmodnextstate), \propvar( \rvgenstate, \rvgenact, \rvmodnextstate) \right)$ with the conditional noise  $\rvcondun \sim \gauss (0, I)$ in Appendix \ref{infoprop_transition}.
\end{definition}
\vspace{-2mm}
Consequently, the \roll state aims to infer the signal $\rvegtnextstate$ given a noisy observation $\modnextstate$.  Propagating model-based rollouts using $\rvpropnextstate$, yields favorable properties as stated in Theorem \ref{theorem:ip_state}.
\clearpage
\begin{theorem}[\roll state]
By construction, $\rvpropnextstate$ addresses questions (i) and (ii) of Section \ref{sec:problem_statement}.
\begin{enumerate}  
    \item [(i)] The distribution of \roll states is identical to the estimated environment distribution.
    \begin{equation}
    \rvpropnextstate \myeq \rvegtnextstate
        \label{eq:induced_dist}
    \end{equation}
    \vspace{-7mm}
    \begin{proof}
        see Appendix \ref{induced_dist}.
    \end{proof}
    \vspace{-2mm}

    \item [(ii)] The sum of marginal entropies of $\rvpropnextstate$ defines the information loss along an \roll rollout. 
    \begin{equation}
       \ent \left( \bar{S}_1, \bar{S}_2, \dots \bar{S}_T | S_0 = s_0, A_0 = a_0, \hat{S}_1 = \hat{s}_1, \dots \hat{S}_T = \hat{s}_T \right) = \sum_{t=0}^T \ent \left( \rvpropnextstate \right)
    \label{eq:gt_multistep_infoloss}
    \end{equation}
    \vspace{-7mm}
    \begin{proof}
        see Appendix \ref{multistep_infoloss}.
    \end{proof}
    \vspace{-2mm}
\end{enumerate}
\label{theorem:ip_state}
\end{theorem}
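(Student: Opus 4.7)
The overall strategy is to exploit that, by construction, $\ipkernel$ is the exact Gaussian posterior of the signal $\rvegtnextstate$ given the noisy observation $\rvmodnextstate$ through the channel~\eqref{eq:mod_prop_error_proc}, which is unbiased under Assumption~\ref{assump:unbiased}. From this viewpoint, part~(i) becomes a Bayesian marginalization identity and part~(ii) a chain-rule calculation that simplifies via the Markov structure of the rollout.

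For part~(i), my plan is to treat one rollout step as conjugate Gaussian inference with prior $\rvegtnextstate \sim \gauss(\egtmean, \egtvar)$ and likelihood $\rvmodnextstate \mid \rvegtnextstate \sim \gauss(\rvegtnextstate, \epivar)$, so that by Section~\ref{sec:ip_estimate_gt} the kernel satisfies $\ipkernel(\cdot \mid \rvgenstate, \rvgenact, \rvmodnextstate) = p(\rvegtnextstate \mid \rvmodnextstate)$. Since $\rvpropnextstate$ is generated by first drawing $\rvmodnextstate$ from its marginal and then drawing from this posterior, the law of total probability gives
\begin{equation*}
p_{\rvpropnextstate}(s) \;=\; \int p(\rvegtnextstate = s \mid \rvmodnextstate = m)\, p_{\rvmodnextstate}(m)\, dm \;=\; p_{\rvegtnextstate}(s),
\end{equation*}
which is exactly~\eqref{eq:induced_dist}. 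Equivalently, at the level of moments, the tower rule yields $\E[\rvpropnextstate] = \E[\propmean(\rvmodnextstate)] = \egtmean$, and the law of total variance gives $\Var[\rvpropnextstate] = \Var[\propmean(\rvmodnextstate)] + \propvar = \Var[\rvegtnextstate]$, where the last step uses that the Gaussian posterior covariance $\propvar$ does not depend on the observation so $\E[\propvar] = \propvar$. Matching the first two moments of two Gaussians then closes the distributional equality.

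For part~(ii), I would apply the chain rule of differential entropy,
\begin{equation*}
\ent\!\left(\bar{S}_1,\dots,\bar{S}_T \mid s_0, a_0, \hat{s}_1,\dots,\hat{s}_T\right) \;=\; \sum_{t=1}^T \ent\!\left(\bar{S}_t \mid \bar{S}_{<t}, s_0, a_0, \hat{s}_1,\dots,\hat{s}_T\right),
\end{equation*}
and then invoke the Markov structure $\bar{S}_{t-1} \to \hat{S}_t \to \bar{S}_t$ induced by the \roll rollout. Each summand should collapse to $\ent(\bar{S}_t \mid \bar{S}_{t-1}, a_{t-1}, \hat{s}_t)$, which by Gaussianity of $\ipkernel$ evaluates to $\tfrac{1}{2}\log\det(2\pi e\, \propvar)$, i.e.\ the marginal entropy $\ent(\rvpropnextstate)$ of a single \roll kernel draw at step~$t$. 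Summing across steps then recovers the right-hand side of~\eqref{eq:gt_multistep_infoloss}.

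The delicate step is the Markov collapse in part~(ii): one must argue that conditioning on the downstream model states $\hat{S}_{t+1},\dots,\hat{S}_T$ does not leak backward information into $\bar{S}_t$ through the child edge $\bar{S}_t \to \hat{S}_{t+1}$ of the rollout DAG. I would handle this either via a d-separation argument once the entire $\hat{S}$-chain is observed, or by explicitly factoring the joint conditional density and showing that, under the unbiasedness assumption and the Gaussian structure, the \emph{forward} \roll-kernel term and the \emph{backward} model-likelihood term combine into a single per-step factor with covariance $\propvar$. Part~(i) is, by contrast, the routine step: essentially one application of Bayes' rule, or equivalently the law of total variance, in the Gaussian conjugate family.
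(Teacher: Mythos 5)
Your proposal matches the paper's own proofs in both structure and substance: for (i) the paper likewise marginalizes the conditional law of $\rvegtnextstate$ given $\rvmodnextstate$ over the marginal of $\rvmodnextstate$ (stated via CDFs and the law of total probability rather than densities, with your moment-matching argument being an equivalent Gaussian shortcut), and for (ii) it applies the entropy chain rule and then collapses each term to $\ent(\rvpropnextstate)$ using exactly the causality/Markov argument you describe. The "delicate step" you flag—that conditioning on downstream model states must not leak information backward—is precisely what the paper disposes of with its step "(a) follows from causality," so your planned d-separation argument would only make explicit what the paper asserts tersely.
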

Figure \ref{fig:ip_visualization} illustrates the \roll rollout mechanism and provides intuition for Theorem \ref{theorem:ip_state}. In the case of a perfect model, i.e. $\eepivar= 0$, depicted in Figure \ref{fig:ip_perfect_model}, the realization $\modnextstate$ provides the information about the process noise realization $\envale$ without ambiguity. Consequently, the belief about the environment state given the sample from the model $\rvpropnextstate = \E [ \rvegtnextstate | \rvmodnextstate = \modnextstate ]$ is a deterministic object and $\ent (\rvpropnextstate) = 0$. In the general scenario of $\eepivar > 0$ depicted in Figure \ref{fig:ip_noisy_model}, the epistemic uncertainty results in ambiguity about the environment state given $\modnextstate$, such that $\ent (\rvpropnextstate) > 0$. Notably, conditioning $\rvegtnextstate$ on $\modnextstate$, results in \roll predictions $\rvpropnextstate$ following estimated environment distribution $\rvegtnextstate$ as stated in Theorem \ref{theorem:ip_state} (i). This results in a data distribution that closely resembles the environment dynamics as desired in question (i) of Section \ref{sec:problem_statement}. Finally, Figure \ref{fig:ip_multistep} depicts a \roll rollout propagated via realization $\propnextstate$. We measure data corruption due to model error using the conditional entropy of a rollout under the estimated environment dynamics $( \bar{S}_1, \bar{S}_2, \dots \bar{S}_T )$ given the realizations observed from the model $( S_0 = s_0, A_0 = a_0, \hat{S}_1 = \hat{s}_1, \dots \hat{S}_T = \hat{s}_T)$, i.e. \emph{given the observed model trajectory, how sure are we on how the corresponding environment trajectory would look like?}. As per Theorem \ref{theorem:ip_state} (ii), this trajectory-based approach to uncertainty can be addressed with the accumulated marginal entropy of $\rvpropnextstate$, addressing question (ii) of Section \ref{sec:problem_statement}.
\begin{figure}[tb]
     \centering
    \begin{subfigure}{0.3\textwidth} \includegraphics[width=\textwidth]{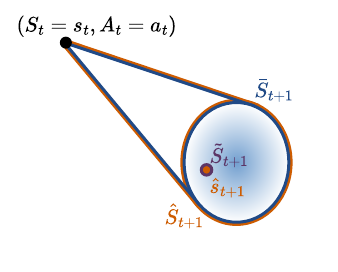}
    \caption{Perfect model.}
    \label{fig:ip_perfect_model}
     \end{subfigure}
     \hfill
     \begin{subfigure}{0.3\textwidth} \includegraphics[width=\textwidth]{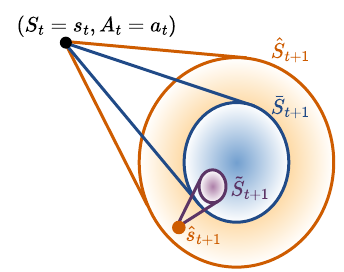}
    \caption{Erroneous model.}
    \label{fig:ip_noisy_model}
     \end{subfigure}
     \hfill
      \begin{subfigure}{0.38\textwidth} \includegraphics[width=\textwidth]{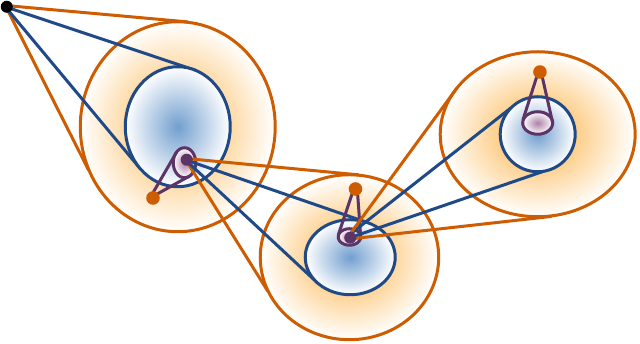}
      \caption{Rollout propagation.}
      \label{fig:ip_multistep}
     \end{subfigure}
     \caption{\roll rollout mechanism. \textit{(a), (b): Generating the \roll state $\rvpropnextstate$ from the estimated predictive distribution $\rvegtnextstate$ and the model sample $\modnextstate$. (c) Performing an \roll rollout.}}
     \label{fig:ip_visualization}
 \end{figure}
\subsection{Rollout Termination Criteria}
\label{sec:ip_stopping_roll}
Having introduced how to propagate \roll rollouts, the question remains when to terminate them. In the following, we propose two termination criteria to address question (iii) of Section \ref{sec:problem_statement}.

First, \roll rollouts build on the assumption that model usage is restricted to a sufficiently accurate subset $\accsp \subseteq \statesp \times \actsp$, following the ideas of \cite{Frauenknecht2024May}.
\begin{definition}[Sufficiently accurate subset] We define the sufficiently accurate subset
\begin{equation}
    \accsp:= \{ (\genstate, \genact) \in \statesp \times \actsp \mid \ent(\rvpropnextstate) \leq \threshsingle, \modnextstate \sim \tskernel ( \cdot | \genstate, \genact ) \}
\end{equation}
based on a threshold $\threshsingle$ for the single-step information loss $\ent(\rvpropnextstate)$.
\end{definition}
\vspace{-3mm}

Second, we restrict \roll rollouts to sufficiently accurate paths to limit uncertainty accumulation. 
\begin{definition}[Sufficiently accurate path]
    Based on the estimated information loss along a rollout \eqref{eq:gt_multistep_infoloss}, we define the set of sufficiently accurate paths of length $t^\prime \in \{ 1, \dots, T \}$ as
    \begin{equation}
        \mathcal{P}^{t^\prime} := \left\{ (\genstate, \genact)_{t=0}^{t^\prime} \in (\statesp \times \actsp)^{t^\prime} \left| \sum_{t=0} ^{t^\prime}\ent(\rvpropnextstate) \leq \threshmulti \right\}\right. .
    \end{equation}
\end{definition}

Heuristics for determining values of $\threshsingle$ and $\threshmulti$ depend on the class of AES model and MBRL algorithm at hand with an example provided in Section \ref{sec:ip_dyna_alg}. Combining the steps above yields the \roll rollout mechanism illustrated in Algorithm \ref{alg:infoprop_rollout}.
 \begin{algorithm}
\caption{\roll}
\begin{algorithmic}
    \Require $s_0$
    \While{$t<T+1$}
    \State $\genact \sim \pi(\cdot | \genstate)$
    \For{$e \in \{1, \dots,  E \}$}
        \State $\params^e \sim \distparams$
    \EndFor
    \State $\rvegtnextstate(\genstate, \genact)$ from \eqref{eq:egtstate}, and $\eepivar(\genstate, \genact)$ from \eqref{eq: epi_var_estimate}
    \State $\modnextstate = \E\left[ \rvmodnextstate | \rvmodale=\modale, \rvparams=\params^{e^{\prime}} \right]$ with $\modale \sim \gauss(0, I), {\params^{e^{\prime}}} \sim \uniform(\{\params^1, \dots,  \params^E \}) $
    \State $\rvpropnextstate$ from \eqref{eq:infoprop_state_app} and $\ent(\rvpropnextstate)$ from \eqref{eq:quantized entropy}
    \If{$\ent(\rvpropnextstate) > \threshsingle$}
    \State \textbf{break}
    \ElsIf{$\sum_{t^{\prime}=0}^{t}\ent(\tilde{S}_{t^{\prime}+1}) > \threshmulti$}
    \State \textbf{break}
    \Else
    \State $\genstate \leftarrow \E[\rvpropnextstate | \rvcondun=\condun]$ with $\condun \sim \gauss(0, I)$
    \EndIf
    \EndWhile
\end{algorithmic}
\label{alg:infoprop_rollout}
\end{algorithm}
\vspace{-3mm}
\section{Augmenting State-Of-The-Art: Infoprop-Dyna}
\label{sec:ip_dyna_alg}
While the \roll rollout mechanism is applicable to different kinds of MBRL with AES models, we illustrate its capabilities in a Dyna-style architecture with probabilistic ensemble (PE) models \cite{Lakshminarayanan2017Dec}. 
We design \emph{\algo} by integrating the \roll rollout mechanism in the state-of-the-art framework proposed in \cite{Janner2019Dec} with minor adaptions.

As discussed in Section \ref{sec:ip_stopping_roll}, heuristics for $\threshsingle$ and $\threshmulti$ depend on the algorithm at hand.
In \algo, we take the common approach \cite{Chua2018, Janner2019Dec} of neglecting cross-correlations between state dimensions for computational reasons. Thus, we can consider data corruption of each state dimension independently. As the predictive quality of different state dimensions can differ substantially, we choose both thresholds as $\statedim$ dimensional vectors, such that a rollout is terminated as soon as the data corruption of any dimension overshoots the corresponding threshold.

In Dyna-style MBRL \cite{Janner2019Dec}, the dynamics model is trained on the data distribution observed during environment interaction. The corresponding transitions are stored in an environment replay buffer $\buffenv  = \{ ( \check{s}_{t}^{(b)}, \check{a}_{t}^{(b)}, \check{r}_{t+1}^{(b)}, \check{s}_{t+1}^{(b)} ) \}_{b=1}^{|\mathcal{D}_{\mathrm{env}}|}$, where $(b)$ indicates the index in the replay buffer. After a fixed number of interaction steps between a model-free RL agent and the environment, the dynamics model is retrained on the data in $\buffenv$, model-based rollouts are performed, and the data is stored stored in a replay buffer $\buffmod$ to train the model-free RL agent.
Consequently, we assume the PE model to be accurate within the data distribution of $\buffenv$ and build the heuristic for $\threshsingle$ and $\threshmulti$ on the predictive uncertainty within the environment buffer.

After each round of retraining the PE model, we compute a set of dimension-wise \roll state entropies for single-step predictions in $\buffenv$ according to
\begin{equation}
    \mathcal{H}^k = \left\{ \ent \left( \bar{S}_{t+1}^k | \rvgenstate = \check{s}_{t}^{(b)}, \rvgenact = \check{a}_{t}^{(b)}, \hat{S}_{t+1}^k = \hat{s}_{t+1}^{k, (b)}  \right) = \ent \left( \tilde{S}_{t+1}^{k, (b)} \right) \right\}_{b=1}^{|\mathcal{D}_{\mathrm{env}}|}
\end{equation}
where $k \in \{ 1, \dots, \statedim \}$ indicates the corresponding state dimension. We define the dimension-wise thresholds $\lambda_1^k$ and $\lambda_2^k$ based on the cumulative distribution function of dimension-wise entropies
\begin{equation}
    F_{\mathcal{H}^k}(h) = \frac{1}{|\mathcal{H}^k|}\sum_{h^{\prime} \in \mathcal{H}^k} \mathbbm{1}[h^{\prime} \leq h].
\end{equation}
The $k^{\text{th}}$ element of $\threshsingle$ is defined as the $\zeta_1$ quantile of the single-step entropy set
\begin{equation}
    \lambda_1^k = \inf \left\{ h \in \mathcal{H}^k : F_{\mathcal{H}^k}(h) \geq \zeta_1 \right\}
    \label{eq:thresh_single}
\end{equation}
and limits model usage to the sufficiently accurate subset $\accsp$. To restrict rollouts of length $t^\prime$ to $\mathcal{P}^{t^\prime}$, we define the $k^{\text{th}}$ element of $\threshmulti$ as the $\zeta_2$ quantile of the entropy set scaled by $\xi$
\begin{equation}
    \lambda_2^k = \xi \inf \left\{ h \in \mathcal{H}^k : F_{\mathcal{H}^k}(h) \geq \zeta_2 \right\}.
    \label{eq:thresh_multi}
\end{equation}
Here, $\zeta_2$ denotes a quantile corresponding to precise predictions and $\xi$ to the number of prediction steps we are willing to accumulate the resulting data corruption. We choose $\zeta_1 = 0.99$, $\zeta_2 = 0.01$ and $\xi = 100$ for all experiments in Section \ref{sec:experiments} without further hyperparameter tuning.

We use pink noise for environment exploration \cite{eberhard2023pink} to quickly expand $\accsp$ \cite{Frauenknecht2024May}.
 Pseudocode is provided in Algorithm \ref{alg:infoprop-dyna} of Appendix \ref{app_pseudocode}.

\section{Experiments and Discussion}
\label{sec:experiments}

To demonstrate the benefits of the \roll mechanism, we compare \algo to state-of-the-art Dyna-style MBRL algorithms on MuJoCo \cite{Todorov} benchmark tasks.
We report
\begin{compactitem}
    \item substantial improvements in the consistency of predicted data, especially over long horizons;
    \item effective rollout termination based on accumulated model error propagation; and
    \item state-of-the-art performance in Dyna-style MBRL on several MuJoCo tasks.
\end{compactitem}
Furthermore, we discuss the limitations of naively integrating \roll into the standard Dyna-style setup \cite{Janner2019Dec} and point to further research questions.

\subsection{Experimental Setup}
\label{sec:exp_setup}
We compare \algo to Model-Based Policy Optimization (MBPO) \cite{Janner2019Dec} and Model-Based Actor-Critic with Uncertainty-Aware Rollout Adaption (MACURA) \cite{Frauenknecht2024May} as well as to Soft Actor-Critic (SAC) \cite{Haarnoja2018Dec} that represents the model-free learner of all the Dyna-style approaches above. We build our implementation\footnote{\url{https://github.com/Data-Science-in-Mechanical-Engineering/infoprop}} on the code base\footnote{\url{https://github.com/Data-Science-in-Mechanical-Engineering/macura}} provided by \cite{Frauenknecht2024May}. Further details are provided in Appendix \ref{app:exp_setup}

\subsection{Prediction Quality}
\label{sec:exp_prediction}
To compare different rollout mechanisms, we train an \algo agent on hopper for $120000$ environment interactions and perform model rollouts from states in $\buffenv$.
 
 First, we evaluate the consistency of \roll and TS rollouts, propagating $20$ steps without termination. Figure \ref{fig:10step_pred} depicts the resulting distributions for the $11^{\text{th}}$ dimension of the hopper state.
\roll rollouts show substantially improved data consistency compared to TS rollouts, underscoring the ability of \roll to effectively mitigate model error propagation.

 Next, we compare the rollout mechanisms of MBPO and MACURA based on TS sampling with \algo rollouts. Figure \ref{fig:100step_pred} shows the results for $11^{\text{th}}$ dimension of the hopper and a maximum rollout length of $100$ steps.
 MBPO rollouts are propagated for $11$ steps following the schedule proposed in \cite{Janner2019Dec}, resulting in a widely spread distribution. In contrast, MACURA has an adaptive rollout length capped at $10$ steps \cite{Frauenknecht2024May}, leading to better data consistency. The improved predictive distribution and capability to estimate accumulated error of \roll allows for substantially longer rollouts up to $100$ steps. The \roll termination criteria reliably stop distorted rollouts, resulting in consistent rollouts over long horizons. Appendix \ref{app:exp_pred} provides additional results for setting the maximum rollout length of all three approaches to $100$.
\begin{figure}[tb]
    \centering
    \begin{subfigure}[t]{0.4\textwidth}
        \centering
        \includegraphics[width=\textwidth, height=3cm]{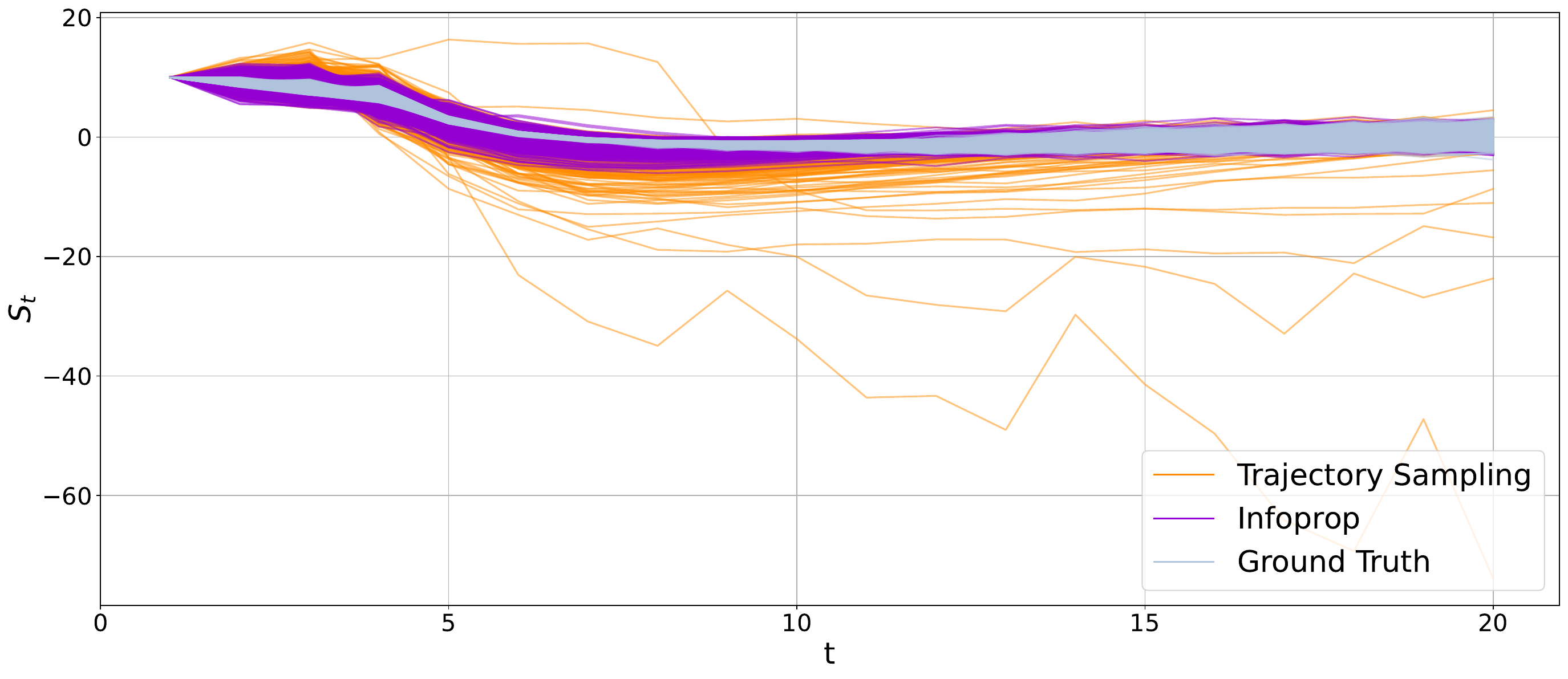} 
        \caption{Trajectory Sampling vs. \roll}
        \label{fig:10step_pred}
    \end{subfigure}
    \hspace{0.01\textwidth} 
    \begin{subfigure}[t]{0.55\textwidth}
        \centering
        \includegraphics[width=\textwidth, height=3cm]{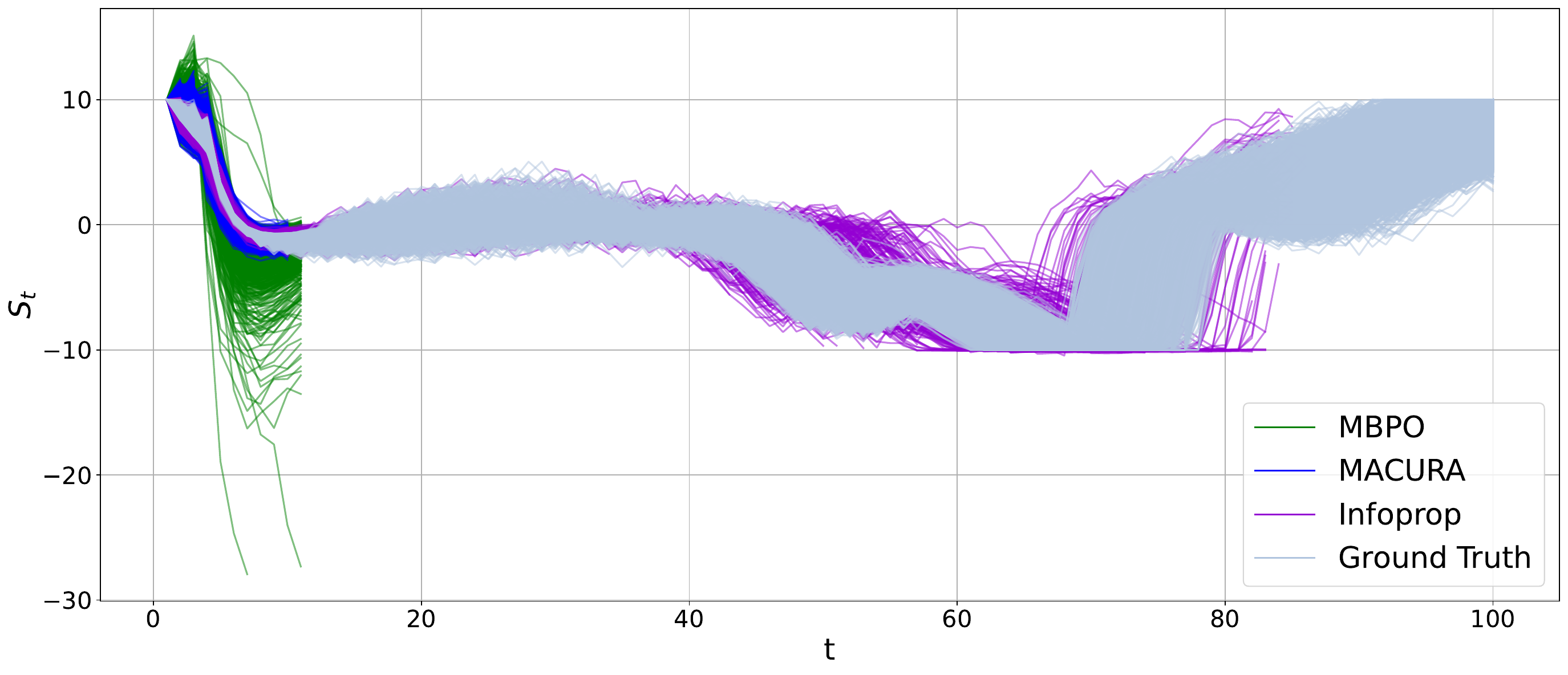} 
        \caption{MBPO vs. MACURA vs. \algo}
        \label{fig:100step_pred}
    \end{subfigure}
    \caption{Predictive quality of rollouts in the $11^{\text{th}}$ state dimension of MuJoCo hopper. \textit{(a) Rollouts according to Trajectory Sampling (TS) and \roll. (b) Rollout schemes of MBPO and MACURA based on TS compared to \algo.} }    
\end{figure}
\vspace{-3mm}
\vspace{2mm}
\subsection{Performance Evaluation}
\label{sec:exp_learning}
As depicted in the top row of Figure \ref{fig:rl_mujoco}, \algo performs on par with or better than MACURA, while substantially outperforming MBPO with respect to data efficiency and asymptotic performance. Notably, \algo consistently outperforms SAC with a fraction of environment interaction.
 The bottom row of Figure \ref{fig:rl_mujoco} depicts the average rollout lengths. \algo shows substantially increased rollout lengths compared to prior methods in all environments but ant. 

A major concern of this work is the consistency of model-based rollouts with the environment distribution. Figure \ref{fig:buffer_mujoco} depicts the data distribution in $\buffenv$ and $\buffmod$ of the respective Dyna-style approaches throughout training for the $11^{\text{th}}$ dimension of the hopper state. The distributions are illustrated via histograms over environment steps. It can be seen that the model data distribution of \algo closely follows the distribution observed in the environment, while both the data from MBPO and MACURA show severe outliers. This is the case, even though the rollout data in \algo is obtained from substantially longer rollouts as can be seen from Figure \ref{fig:rl_mujoco} which indicates the capabilities of the \roll rollout mechanism.
\begin{figure}[tb]
    \centering
    \begin{subfigure}[b]{0.99\textwidth}
        \centering
        \includegraphics[width=\textwidth]{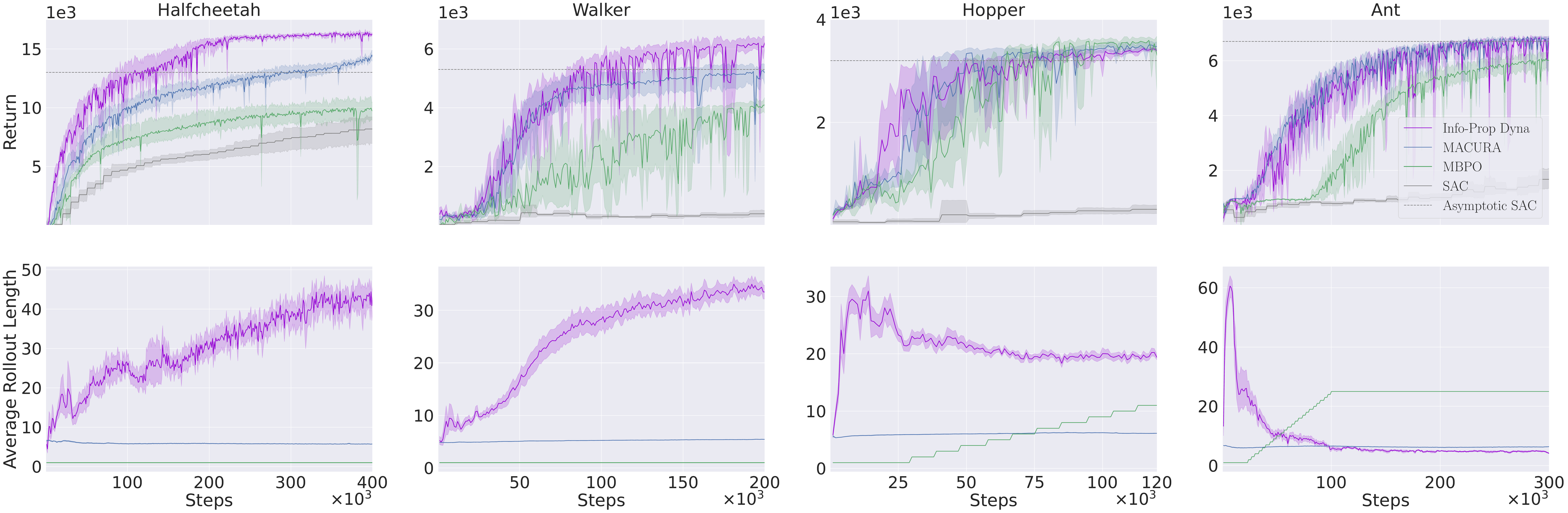} 
        \caption{Performance and average rollout length on MuJoCo tasks.}
        \label{fig:rl_mujoco}
        \vspace{3mm}
    \end{subfigure}
    
    \begin{subfigure}[b]{0.99\textwidth}
        \centering
        \includegraphics[width=\textwidth]{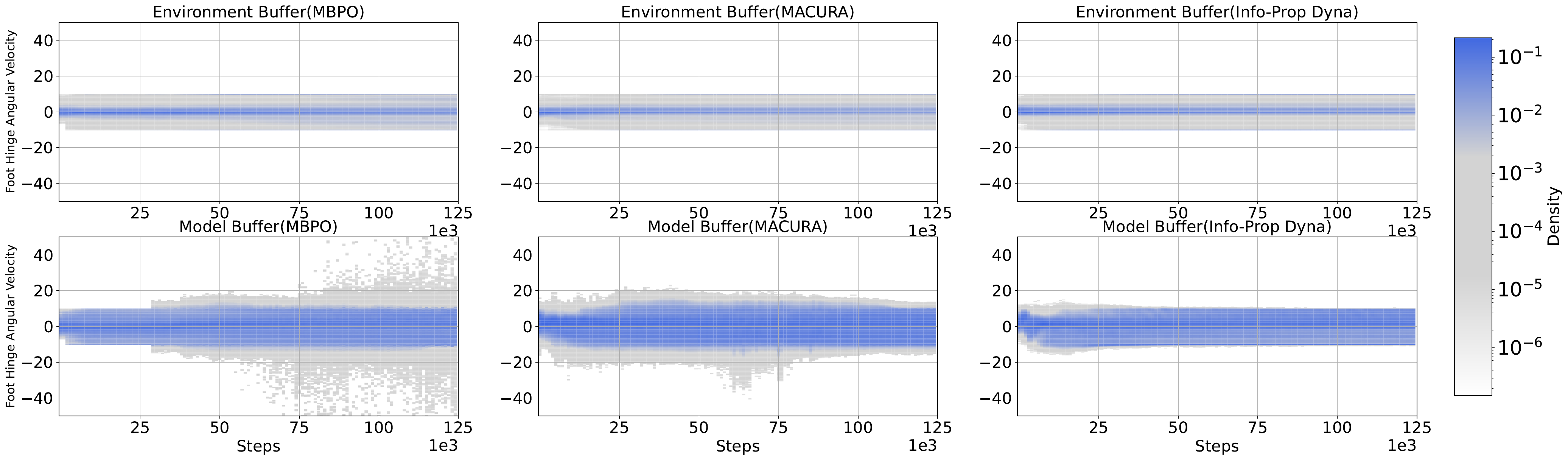} 
        \caption{Adequacy of $\buffmod$ on the $11^{\text{th}}$ state dimension of hopper.}
        \label{fig:buffer_mujoco}
    \end{subfigure}
    \caption{Evaluation on MuJoCo tasks. \textit{(a) \algo shows state-of-the-art performance for Dyna-style MBRL on several MuJoCo tasks while considerably increasing average rollout length on most tasks. (b) \algo shows substantially improved consistency between $\buffenv$ and $\buffmod$. }}
    \label{fig:main}
\end{figure}
\vspace{-3mm}
\subsection{Limitations and Outlook}
\label{sec:exp_limits}
Despite the excellent quality of model-generated data with the \roll rollout, the limitations of \algo are most apparent on MuJoCo humanoid with results provided in Appendix \ref{app:exp_limits}. These show instabilities in learning and point to structural problems when integrating \roll rollouts naively into standard Dyna-style architectures \cite{Janner2019Dec}.

Figure \ref{fig:buffer_mujoco} shows that the long rollouts of \algo can cause rapid distribution shifts in $\buffmod$, especially early in training. These nonstationary buffers are a well-known challenge to deep Q-learning methods \cite{Mnih2015Feb}. Another issue is primacy bias in model learning \cite{Qiao2023Oct}, where the model overfits to initial data and subsequently struggles to generalize, as seen in the decreasing rollout length for ant in Figure \ref{fig:rl_mujoco}. The main problem with Infoprop-Dyna is likely overfitting critics and plasticity loss \cite{Nikishin2022ThePB, DOro2023SampleEfficientRL}, as also reported by \cite{Frauenknecht2024May} for Dyna-style MBRL trained on high-quality data. We provide an ablation on this observation and sketch methods to counteract this phenomenon in Appendix \ref{subes:lean_inst}. 
\section{Related Work}

The negative effects of accumulated model error on the performance of MBRL methods is a long-studied problem \cite{Venkatraman2015Feb, Talvitie2016Dec, Asadi2018Apr, Asadi2018Oct}.

Different model architectures have been proposed to mitigate this issue, such as trajectory models \cite{Asadi2019May, Lambert2021}, bidirectional models \cite{Lai2020Nov}, temporal segment models \cite{Mishra2017Mar} or self-correcting models \cite{Talvitie2016Dec}. These architectures, however, imply substantial additional effort for model learning, such that state-of-the-art performance in the respective fields of MBRL is often reported for simpler single-step model architectures \cite{Chua2018, Janner2019Dec, Buckman2018Dec}.

These approaches address the problem of error accumulation by keeping model-based rollouts sufficiently short. \citet{Janner2019Dec} introduce the concept of branched rollouts that allows to cover relevant parts of $\statesp$ with short model rollouts. Other methods weight rollouts of different lengths according to their single-step uncertainty \cite{Buckman2018Dec} or use single-step uncertainty to schedule rollout length \cite{Pan2020Dec, Frauenknecht2024May}.  \roll allows to infer model data consistent with the environment distribution over long rollout horizons using comparatively simple model architectures and computationally cheap conditioning operations.

\roll is inspired by an information-theoretic view on RL \cite{Lu2023}. Thus far, information-theoretic arguments have been mostly used to improve the exploration \cite{Haarnoja2018Dec, Lu2019, Ahmed2018, Mohamed2015} and generalization \cite{Tishby2015, Lu2020, Igl2019, Islam2023} of model-free RL methods.
While aspects of dynamical systems such as causality, modeling, and control \cite{LozanoDuran2021}, predictability \cite{Kleeman2011} or dealing with noisy observations \cite{Gattami2014} have been studied from an information theoretic perspective, these works do not directly apply to the MBRL setup nor extend to long model-based rollouts.

\section{Concluding Remarks}

Data consistency of model-based rollouts is a key criterion for the performance of MBRL approaches. This work proposes the novel \roll mechanism that substantially improves rollouts with common AES models. We reduce the influence of epistemic uncertainty on the predictive distribution of model-based rollouts, keep track of data corruption through propagated model error over long horizons, and terminate rollouts based on data corruption. This allows for considerably increased rollout lengths while substantially improving data consistency simultaneously.

While \roll is applicable to a broad range of MBRL methods, we demonstrate its capabilities by naively integrating \roll into a standard Dyna-style MBRL architecture \cite{Janner2019Dec} resulting in the \algo algorithm. We report state-of-the-art performance in several MuJoCo tasks while pointing to necessary adaptions to the existing algorithmic framework to fully unleash the potential of \roll rollouts.

\subsubsection*{Acknowledgments}
We thank Christian Fiedler, Pierre-Fran\c cois Massiani, and David Stenger for the fruitful discussions on the work presented in this paper.
This work is funded in part under the Excellence Strategy of the Federal Government and the Länder (G:(DE-82)EXS-SF-OPSF854)
and the German Federal Ministry of Education and
Research (BMBF) under the Robotics Institute Germany (RIG), which the authors gratefully acknowledge. Friedrich Solowjow is supported by the KI-Starter grant
by the state of NRW. Further, the authors gratefully acknowledge the computing time provided to them at the NHR Center NHR4CES at RWTH Aachen University (project number p0022301). This is funded by the Federal Ministry of Education and Research, and the state governments participating on the basis of the resolutions of the GWK for national high performance computing at universities (www.nhr-verein.de/unsere-partner). 

\bibliography{infoprop}
\bibliographystyle{iclr2025_conference}
\clearpage

\appendix
\section{Notation}
\label{sec:app_notation}
\subsection{Random Variables}
\begin{tabular}{c l}
   $\rvgenstate$  & Random variable of a general state \\
     $\rvenvstate$ & Random variable of the environment state\\
     $\rvmodstate$ & Random variable of the model state\\
     $\rvegtstate$ & Random variable of the estimated environment state\\
     $\rvpropstate$ & Random variable of the \roll state \\
     $\rvgenact$  & Random variable of the action \\
     $\rvmoderror$ & Random variable of the model error\\
     $\rvenvale$ & Random variable of the aleatoric noise\\
     $\rvmodepi$ & Random variable of the epistemic noise\\
     $\rvcondun$ & Random variable of the conditional noise\\
     $\rvparams$ & Random variable of the model parameters\\
\end{tabular}

\subsection{Realizations}
\begin{tabular}{c l}
   $\genstate$  & Realization of a general state \\
     $\envstate$ & Realization of the environment state\\
     $\modstate$ & Realization of the model state\\
     $\egtstate$ & Realization of the estimated environment state\\
     $\propstate$ & Realization of the \roll state \\
     $\genact$  & Realization of the action \\
     $\envale$ & Realization of the aleatoric noise\\
     $\modepi$ & Realization of the epistemic noise\\
     $\condun$ & Realization of the conditional noise\\
     $\params$ & Realization of the model parameters\\
\end{tabular}

\subsection{Transition Kernels}
\begin{tabular}{l l}
    $\envkernel\left( \cdot | \rvgenstate, \rvgenact \right) = \gauss\left( \envmean(\rvgenstate, \rvgenact), \envvar(\rvgenstate, \rvgenact)\right)$ & Environment Transition Kernel\\
    $\tskernel(\cdot | \rvgenstate, \rvgenact) = \gauss\left(\modmean(\rvgenstate, \rvgenact), \modvar (\rvgenstate, \rvgenact)\right)$ & Trajectory Sampling Kernel\\
    $\ipkernel(\cdot | \rvgenstate, \rvgenact, \rvmodnextstate) = \gauss\left(\propmean ( \rvgenstate, \rvgenact, \rvmodnextstate), \propvar( \rvgenstate, \rvgenact, \rvmodnextstate) \right)$ & \roll Kernel\\
\end{tabular}

\clearpage
\section{Toy Example}
\label{app:toy_exmple}

In Figure \ref{fig:prob_statement_ts}, we illustrate the data consistency of Trajectory Sampling \cite{Chua2018} and \roll in a one-dimensional random walk example with $\statesp \subseteq \mathbb{R}$ and $\actsp \subseteq \mathbb{R}$. The dynamics follow \eqref{eq:env_prediction} with $\envmean( \rvgenstate, \rvgenact) = \rvgenstate + \rvgenact$ and $\envchol( \rvgenstate, \rvgenact) = 0.01$. Actions are distributed according to $\rvgenact \sim \gauss(0, 0.1)$. All rollouts start from $s_0 = 0$ and are propagated for $100$ steps. We perform $1000$ rollouts under the environment dynamics and train a Probabilistic Ensemble \cite{Lakshminarayanan2017Dec} model according to the information provided in Table \ref{tab:rand_walk_model}.
Subsequently, we perform $1000$ model-based rollouts with this model and the respective rollout mechanism.

\begin{table}[!h]
    \centering
    \begin{tabular}{ |c|c| } 
         \hline
         \textbf{Hyperparameter} &\textbf{Value} \\ 
         \hline
          number of ensemble members & 5\\ 
         
         number of hidden neurons   & 2\\ 
                 
           number of layers   & 1\\
                 
           learning rate   & 0.001\\ 
          
         weight decay  & 0.00001\\ 
          
          number of epochs  & 4\\ 
         \hline
    \end{tabular}
\caption[Hyperparameters used for training the model on the random walk dataset.]{Hyperparameters used for training the model on the random walk dataset.}
\label{tab:rand_walk_model}
\end{table}
\section{Pseudocode Algorithms}
\label{app_pseudocode}

\begin{algorithm}
\caption{Trajectory Sampling \cite{Chua2018} }
\begin{algorithmic}
    \Require $s_0$
    \While{$t<T+1$}
    \State $\genact \sim \pi(\cdot | \genstate)$
    \State $\modnextstate = \E\left[ \rvmodnextstate | \rvmodale=\modale, \rvparams=\params \right]$ with $\modale \sim \gauss(0, I) $ and $\params \sim \distparams$
    \State $\genstate \leftarrow \modnextstate$
    \EndWhile
\end{algorithmic}
\label{alg:trajectory_sampling}
\end{algorithm}

\begin{algorithm}
\caption{\algo (Pseudocode adapted from \cite{Janner2019Dec}) }\label{alg:infoprop_dyna}
\begin{algorithmic}
    \Require Policy $\pi$, predictive AES model $p_{\Theta}$, environment buffer $\buffenv$, model buffer $\buffmod$, rollout parameters $T$, $\zeta_1$, $\zeta_2$, $\xi$
    \For {$N$ epochs}
        \For{$J$ steps}
            \State Interact with the environment according to $\pi$; add to $\buffenv$
        \EndFor
        \State Train model $p_{\Theta}$ on $\buffenv$
        \State Perform single-step predictions with $p_{\Theta}$ in $\buffenv$
        \State Compute $\threshsingle$ \eqref{eq:thresh_single} and $\threshmulti$ \eqref{eq:thresh_multi}
        \For{$M$ model rollouts}
            \State Sample $s_0$ uniformly from $\buffenv$
            \State Perform \roll rollouts according to Algorithm \ref{alg:infoprop_rollout}; add to $\buffmod$
        \EndFor
        \For{$G \cdot J$ gradient updates}
            \State Update $\pi$ on $\buffenv \cup \buffmod$
        \EndFor
    \EndFor
\end{algorithmic}
\label{alg:infoprop-dyna}
\end{algorithm}
\newpage
\section{Derivations}

\subsection{Quantized Entropy}
\label{sec:app_quan_ent}
 For a RV $Z \in \mathcal{Z} \subseteq \mathbb{R}^{n_{\mathcal{Z}}}$ with $Z \sim \gauss(\mu_Z, \Sigma_Z)$ and discretization step size $\Delta z^{(k)}$ of the $k^{\text{th}}$ dimension, the quantized entropy \cite{Thomas2006Jan} is
 \begin{equation}
    \ent(Z) = \frac{1}{2} \log_2 \left((2 \pi \mathrm{e})^{n_{\mathcal{Z}}} |\Sigma_Z|\right) - \sum_{k=1}^{n_{\mathcal{Z}}} \log_2 \left(\Delta z^{(k)} \right).
    \label{eq:quantized entropy}
 \end{equation}




\subsection{Maximum Likelihood Predictive Distribution}
\label{app:der_ci_fusion}
\subsubsection{Proof of Lemma \ref{lemma_ml_dist}}
\label{proof_lemma_ml_dist}
\begin{proof}
We introduce the conditional expectation over the next state under the model, given a realization $\params^e$
\begin{equation}
    \rvmodnextstate^e := \E_{\tskernel}\left[ \rvmodnextstate | \rvparams = \params^e \right].
\end{equation}
Further,  $\hat{\mu}^e := \modmeanreal$, $\hat{\Sigma}^e := \modvarreal$ and $\hat{L}^e :=\modcholreal$ such that 
\begin{equation}
    \rvmodnextstate^e = \hat{\mu}^e(\rvgenstate, \rvgenact) + \hat{L}^e(\rvgenstate, \rvgenact) \rvmodale.
\end{equation}
Given $E$ RVs $\rvmodnextstate^e$ we define their joint distribution
\begin{equation}
\begin{aligned}
&\left(\begin{array}{c}
\rvmodnextstate^1 \\
\vdots \\
\rvmodnextstate^E
\end{array}\right)\sim \gauss \left(\left[\begin{array}{c}
\hat{\mu}^1 \\
\vdots \\
\hat{\mu}^E
\end{array}\right],\left[\begin{array}{ccc}
\hat{\Sigma}^1 & \cdots & \hat{\Sigma}^{1 E} \\
\vdots & \ddots & \vdots \\
\hat{\Sigma}^{E 1} & \cdots & \hat{\Sigma}^E
\end{array}\right]\right)\\
=: & \quad \hat{S} \sim \gauss \left( \hat{\mu}, \hat{\Sigma} \right)
\end{aligned}
\end{equation}
with $\hat{\Sigma}^{e f} := \mathrm{Cov}\left[ \rvmodnextstate^e, \rvmodnextstate^f \right]$.
We aim to track $\rvgennextstate$ such that
\begin{equation}
    H \rvgennextstate \sim \gauss \left( \hat{\mu}, \hat{\Sigma} \right)
\end{equation}
where we use $H = [ I, I, \dots, I]^\top \in \mathbb{R}^{\statedim \cdot E \times \statedim}$ to project $\rvgennextstate$ to the dimension of the joint $\hat{S}$.

We define the maximum likelihood loss
\begin{equation}
    \mathcal{L}(\rvgennextstate) = p(\hat{S} | \rvgennextstate) = \frac{1}{|2 \pi \hat{\Sigma}|^{\frac{1}{2}}} \mathrm{exp} \left( - \frac{1}{2} \left( \hat{S} - H \rvgennextstate \right) \hat{\Sigma}^{-1} \left( \hat{S} - H \rvgennextstate \right)\right)
\end{equation}
such that
\begin{equation}
    \mathrm{log}\left( \mathcal{L}(\rvgennextstate) \right) = - \frac{1}{2} \mathrm{log}\left( |2 \pi \hat{\Sigma}| \right) - \frac{1}{2} \left( \hat{S} - H \rvgennextstate \right) \hat{\Sigma}^{-1} \left( \hat{S} - H \rvgennextstate \right).
\end{equation}
We aim to obtain the maximizer of the log-likelihood such that
\begin{equation}
    \rvegtnextstate = \arg \max_{\rvgennextstate} \mathrm{log}\left( \mathcal{L}(\rvgennextstate) \right).
\end{equation}
Consequently,
\begin{equation}
\begin{aligned}
    \frac{\partial}{\partial \rvgennextstate} \mathrm{log}\left( \mathcal{L}(\rvgennextstate) \right) &= - \frac{1}{2} H^\top \hat{\Sigma}^{-1} \left( \hat{S} - H \rvgennextstate \right) := 0\\
\Rightarrow \quad & H^\top \hat{\Sigma}^{-1} \hat{S} - H^\top \hat{\Sigma}^{-1} H \rvegtnextstate = 0\\
\Rightarrow \quad & \rvegtnextstate = \left(H^\top \hat{\Sigma}^{-1} H \right)^{-1} H^\top \hat{\Sigma}^{-1} \hat{S}.
\end{aligned}
\end{equation}
As a result, we obtain
\begin{equation}
    \egtmean = \E \left[ \rvegtnextstate \right] = \left( H^\top \hat{\Sigma}^{-1} H \right)^{-1} H^\top \hat{\Sigma}^{-1} \hat{\mu}
    \label{eq:a_mubar}
\end{equation}
and
\begin{equation}
    \begin{aligned}
        \egtvar &= \mathrm{Var}\left[ \rvegtnextstate \right] = \mathrm{Var}\left[ \left(H^\top \hat{\Sigma}^{-1} H \right)^{-1} H^\top \hat{\Sigma}^{-1} \hat{S} \right]\\
        & = \left(H^\top \hat{\Sigma}^{-1} H \right)^{-1} H^\top \hat{\Sigma}^{-1} \mathrm{Var}\left[ \hat{S}\right] \hat{\Sigma}^{-1} H \left(H^\top \hat{\Sigma}^{-1} H \right)^{-1}\\
        & = \left(H^\top \hat{\Sigma}^{-1} H \right)^{-1} H^\top \hat{\Sigma}^{-1} \hat{\Sigma} \hat{\Sigma}^{-1} H \left(H^\top \hat{\Sigma}^{-1} H \right)^{-1} = \left(H^\top \hat{\Sigma}^{-1} H \right)^{-1} \\
        \label{eq:a_sigmabar}
    \end{aligned}
\end{equation}
which corresponds to standard results in Kalman fusion. However, as the cross-correlations $\hat{\Sigma}^{e f}$ are unknown in practice, we approximate the Kalman fusion results \eqref{eq:a_mubar} and \eqref{eq:a_sigmabar} using covariance intersection fusion \cite{Julier01} with uniform weights, making use of Assumption \ref{assump:var_consistent}. This results in
\begin{equation}
    \egtvar = \left( \frac{1}{E} \sum_{e=1}^E \left( \hat{\Sigma}^e \right)^{-1}\right)^{-1}
    \label{eq:ci_variance}
\end{equation}
and
\begin{equation}
    \egtmean = \egtvar \left( \frac{1}{E} \sum_{e=1}^E \left( \hat{\Sigma}^e \right)^{-1} \hat{\mu}^e \right).
    \label{eq:ci_mean}
\end{equation}
Hence, we can estimate the environment state as 
\begin{equation}
    \rvegtnextstate = \egtmean(\rvgenstate, \rvgenact) + \egtchol(\rvgenstate, \rvgenact)\rvenvale ,
\end{equation}
with $\bar L \bar L^\top = \bar \Sigma$ and $W_t\sim\gauss (0, I)$.
\end{proof}

\subsubsection{Proof of Lemma \ref{lemma_ml_epist}}
\label{proof_lemma_ml_epist}
\begin{proof}
We continue here using the quantities we estimated in the previous section. To estimate $\epivar$, we interpret $\{\hat{\mu}^1\}_{e=1}^E$ as samples from a distribution whose mean is known to be $\egtmean$. With this, the maximum likelihood estimate of $\epivar$ can be obtained trivially as
\begin{equation}
    \eepivar = \frac{1}{E} \sum_{e=1}^E \left( \hat{\mu}^e - \egtmean \right)\left( \hat{\mu}^e - \egtmean \right)^\top.
\end{equation}
\end{proof}

\subsection{\roll State}
\label{infoprop_transition}

As introduced in \eqref{eq:def_ip_state}, the \roll state is defined as
\begin{equation}
    \rvpropnextstate := \E \left[ \rvegtnextstate | \rvmodnextstate = \modnextstate \right] = \E_{\ipkernel} \left[ \rvgennextstate | \rvgenstate=\genstate, \rvgenact = \genact, \rvmodnextstate=\modnextstate, \rvcondun\right]
\end{equation}

Combining \eqref{eq:mod_prop_error_proc} and Assumption \ref{assump:unbiased}, we have
    \begin{equation}
        \rvmodnextstate = \rvenvnextstate + \epichol (\rvgenstate, \rvgenact) \rvmodepi.
        \label{eq:app_pred_model}
    \end{equation}
Plugging the respective maximum likelihood estimates into \eqref{eq:app_pred_model} yields
    \begin{equation}
        \rvmodnextstate = \rvegtnextstate + \eepichol (\rvgenstate, \rvgenact) \rvmodepi
    \end{equation}
with
\begin{equation}
    \rvegtnextstate = \egtmean(\rvgenstate, \rvgenact) + \egtchol(\rvgenstate, \rvgenact)\rvenvale
\end{equation}
according to \eqref{eq:egtstate}.
As we can generally consider model uncertainty as independent from process noise, i.e. $\rvmodepi \perp \rvenvale$, the \roll state
        $\rvpropnextstate = \E [ \rvenvnextstate | \rvmodnextstate = \modnextstate ]$
    can be computed using a standard Kalman update.
    
    The general form of the Kalman update \cite{Simon2006Jan} considers two Gaussian RVs $X \sim \gauss\left( \mu_X, \Sigma_X \right)$ and $Y = X + N$ with $N \sim \gauss \left( 0, \Sigma_N \right)$ and $X \perp N$. Then, given an observation $y$ we can compute the conditional expectation of $X$
    \begin{equation}
        \E \left[ X | Y=y \right] \sim \gauss \left( \mu_{X|Y=y}, \Sigma_{X|Y=y} \right)
        \label{eq:kalman_update}
    \end{equation}
    with
    \begin{equation}
        \mu_{X|Y=y} = \mu_X + K(y-\mu_X),
    \end{equation}
    \begin{equation}
        \Sigma_{X|Y=y} = \left( I - K \right) \Sigma_X,
    \end{equation}
    and
    \begin{equation}
        K=\Sigma_X \left( \Sigma_X + \Sigma_N\right)^{-1}.
    \end{equation}

    Following \eqref{eq:def_ip_state}, we can compute the \roll state via \eqref{eq:kalman_update} choosing
    \begin{equation}
        \mu_X = \egtmean(\genstate, \genact),
    \end{equation}
    \begin{equation}
        \Sigma_X = \egtvar(\genstate, \genact),
    \end{equation}
    \begin{equation}
        \Sigma_N = \eepivar(\genstate, \genact),
    \end{equation}
and
    \begin{equation}
        y = \egtmean(\genstate, \genact) + \egtchol(\genstate, \genact)\envale + \eepichol(\genstate, \genact)\modepi.
    \end{equation}
This yields the propagation equation of the \roll state
\begin{equation}
    \rvpropnextstate = \propmean ( \rvgenstate=\genstate, \rvgenact=\genact, \rvmodnextstate=\modnextstate) + \propchol ( \rvgenstate=\genstate, \rvgenact=\genact, \rvmodnextstate=\modnextstate) \rvcondun
    \label{eq:infoprop_state_app}
\end{equation}
with
\begin{equation}
    \propmean(\genstate, \genact, \modnextstate) = \egtmean(\genstate, \genact) + \kalgain(\genstate, \genact)\left( \egtchol(\genstate, \genact) \envale + \eepichol(\genstate, \genact) \modepi \right),
\end{equation}
\begin{equation}
    \propvar(\genstate, \genact, \modnextstate) = \left( I - \kalgain(\genstate, \genact) \right) \egtvar(\genstate, \genact),
\end{equation}
\begin{equation}
    \kalgain(\genstate, \genact) = \egtvar(\genstate, \genact) \left( \egtvar(\genstate, \genact) + \eepivar(\genstate, \genact)\right)^{-1},
\end{equation}
\begin{equation}
    \propchol(\genstate, \genact) \propchol (\genstate, \genact)^\top = \propvar(\genstate, \genact),
\end{equation}
and
\begin{equation}
    \eepichol(\genstate, \genact) \eepichol (\genstate, \genact)^\top = \eepivar(\genstate, \genact).
\end{equation}

\subsection{Induced State Distribution by the \roll Rollout}
\label{induced_dist}
\begin{lemma}
    As introduced in \eqref{eq:induced_dist}, the next state distribution induced by the \roll rollout is the same as that given by the estimated ground truth:
\begin{equation}
    \rvpropnextstate \myeq \rvegtnextstate
        \label{eq:induced_dist}
    \end{equation}
\end{lemma}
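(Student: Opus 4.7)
The plan is to show that $\rvpropnextstate$ is a jointly Gaussian function of the independent noises $\rvenvale$, $\rvmodepi$, $\rvcondun$, and then match its first two moments to those of $\rvegtnextstate \sim \gauss(\egtmean, \egtvar)$. Since equality of mean and covariance implies equality in distribution for Gaussians, this will suffice.

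First, I would unfold the propagation equation derived in Appendix \ref{infoprop_transition} and write
\begin{equation*}
\rvpropnextstate = \egtmean + \kalgain\bigl(\egtchol \rvenvale + \eepichol \rvmodepi\bigr) + \propchol \rvcondun,
\end{equation*}
where $\rvenvale, \rvmodepi, \rvcondun$ are mutually independent standard Gaussians. Taking expectations gives $\E[\rvpropnextstate] = \egtmean$, matching the mean of $\rvegtnextstate$.

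Next, I would compute the covariance. The first two summands contribute
\begin{equation*}
\kalgain\bigl(\egtvar + \eepivar\bigr)\kalgain^\top = \egtvar \kalgain^\top = \kalgain\,\egtvar,
\end{equation*}
where I have used $\kalgain(\egtvar + \eepivar) = \egtvar$ from the definition of the Kalman gain and the symmetry of $\egtvar$ and $(\egtvar + \eepivar)^{-1}$. The independent noise $\propchol \rvcondun$ contributes $\propvar = (I - \kalgain)\egtvar$. Adding the two pieces telescopes to $\kalgain \egtvar + (I - \kalgain)\egtvar = \egtvar$, which coincides with $\Var(\rvegtnextstate)$.

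The main conceptual step — and the one worth stating clearly — is the law-of-total-variance interpretation: the Kalman-based construction of $\rvpropnextstate$ as a conditional expectation plus an independent residual of variance equal to the conditional variance is precisely the decomposition that reassembles the unconditional law of $\rvegtnextstate$. The only obstacle is bookkeeping: one must keep straight that $\modnextstate$ appearing in the definition of $\rvpropnextstate$ is a realization of the random $\rvmodnextstate = \rvegtnextstate + \eepichol \rvmodepi$, so that the marginal of $\rvpropnextstate$ integrates over both $\rvmodnextstate$ (through $\rvenvale, \rvmodepi$) and the conditional noise $\rvcondun$. Once this is made explicit, linearity of the construction in jointly Gaussian variables yields Gaussianity of $\rvpropnextstate$, and the moment match established above concludes $\rvpropnextstate \myeq \rvegtnextstate$.
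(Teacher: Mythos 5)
Your proof is correct, but it takes a genuinely different route from the paper's. The paper argues at the level of distribution functions: it writes $\P(\rvpropnextstate \leq \egtnextstate)$ as an integral of the conditional law of $\rvegtnextstate$ given $\rvmodnextstate=\modnextstate$ against the density of $\rvmodnextstate$, and recognizes the result as the law of total probability for $\P(\rvegtnextstate \leq \egtnextstate)$ --- essentially a tower-property argument that never touches the Gaussian formulas and would remain valid for any correctly constructed conditional distribution. You instead unfold the explicit Kalman propagation equation \eqref{eq:infoprop_state_app} and match the first two moments, using $\kalgain(\egtvar + \eepivar) = \egtvar$ together with $\propvar = (I-\kalgain)\egtvar$ to telescope the covariance to $\kalgain\egtvar + (I-\kalgain)\egtvar = \egtvar$; combined with joint Gaussianity and linearity this gives equality in distribution. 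Your computation checks out. What each approach buys: the paper's argument is shorter and distribution-agnostic, but it leans on the informal assertion that $\E[\rvegtnextstate\mid\rvmodnextstate]$ ``describes the behavior'' of $\rvegtnextstate$ given $\rvmodnextstate$; yours is confined to the Gaussian setting but is fully explicit, makes the law-of-total-variance structure visible rather than assumed, and doubles as a sanity check on the Kalman-update formulas of Appendix \ref{infoprop_transition}. One small point worth making explicit rather than leaving as ``bookkeeping'': your marginalization requires $\rvcondun$ to be drawn independently of $(\rvenvale,\rvmodepi)$, i.e.\ independently of the realized $\rvmodnextstate$; this is implicit in the paper's construction of the conditional noise but should be stated as a hypothesis for the moment computation to be valid.
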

\begin{proof}
    We show equality in distribution via comparison of the cumulative distribution functions (CDF) of $\rvpropnextstate$ and $\rvegtnextstate$.
    If we can show that the CDFs are identical, i.e. $\P(\rvpropnextstate \leq \egtnextstate) = \P(\rvegtnextstate \leq \egtnextstate) \quad \forall \egtnextstate \in \statesp$, the equality in distribution follows.

    We compute $\P(\rvpropnextstate \leq \egtnextstate)$ using $\rvpropnextstate = \E[\rvegtnextstate|\rvmodnextstate = \modnextstate]$ and marginalizing over $\rvmodnextstate$
    \begin{equation}
        \P(\rvpropnextstate \leq \egtnextstate) = \int_{\statesp}\P(\E[\rvegtnextstate|\rvmodnextstate] \leq \egtnextstate | \rvmodnextstate=\modnextstate ) \cdfmod(\modnextstate) \mathrm{d}\modnextstate\\
    \end{equation}
    with $\cdfmod$ the probability density function of $\rvmodnextstate$.
    
    By construction, $ \E[\rvegtnextstate|\rvmodnextstate]$ describes the behavior of $\rvegtnextstate$ given $\rvmodnextstate$. Consequently, 
    \begin{equation}
        \P(\E[\rvegtnextstate|\rvmodnextstate] \leq \egtnextstate | \rvmodnextstate=\modnextstate ) = \P(\rvegtnextstate \leq \egtnextstate | \rvmodnextstate=\modnextstate )
    \end{equation}
    and therefore
    \begin{equation}
        \P(\rvpropnextstate \leq \egtnextstate) = \int_{\statesp} \P(\rvegtnextstate \leq \egtnextstate | \rvmodnextstate=\modnextstate ) \cdfmod(\modnextstate) \mathrm{d}\modnextstate.
        \label{eq:law_t_p}
    \end{equation}
    The right hand side of \eqref{eq:law_t_p} represents the law of total probability for $P(\rvegtnextstate\leq\egtnextstate)$
    \begin{equation}
        \P(\rvegtnextstate\leq\egtnextstate) = \int_{\statesp} \P(\rvegtnextstate \leq \egtnextstate | \rvmodnextstate=\modnextstate ) \cdfmod(\modnextstate) \mathrm{d}\modnextstate.
    \end{equation}
    Therefore, we have
    \begin{equation}
        \P(\rvpropnextstate \leq \egtnextstate) = \P(\rvegtnextstate \leq \egtnextstate) \quad \forall \egtnextstate \in \statesp
    \end{equation}
    and can conclude
    \begin{equation}
    \rvpropnextstate \myeq \rvegtnextstate.
    \end{equation}
\end{proof}
\subsection{Information Loss along a \roll Rollout}
\label{multistep_infoloss}
\begin{lemma}
As introduced in \eqref{eq:gt_multistep_infoloss}, the total information loss incurred during a \roll equals the accumulated entropy of the \roll state:
\begin{equation}
\ent \left(\bar{S}_1, \bar{S}_2, \dots, \bar{S}_T \vert {S}_0={s}_0, {A}_0={a}_0, \hat{S}_1=\hat{s}_1 \ldots\hat{S}_T=\hat{s}_T\right)  = \sum_{t=0}^{T-1} \ent \left( \rvpropnextstate \right)
\end{equation}
\end{lemma}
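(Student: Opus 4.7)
The plan is to open up the joint conditional entropy on the left-hand side with the chain rule for differential entropy, collapse each resulting conditional entropy to a single-step one using the Markov structure of the rollout, and then identify each single-step conditional entropy with $\ent(\rvpropnextstate)$ by appealing to the construction of the Infoprop state as the conditional distribution of $\rvegtnextstate$ given $\rvmodnextstate=\modnextstate$.

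First, the entropy chain rule gives immediately
\begin{equation*}
\ent(\bar S_1,\ldots,\bar S_T \mid s_0,a_0,\hat s_1,\ldots,\hat s_T) \;=\; \sum_{t=0}^{T-1} \ent\!\left(\bar S_{t+1} \,\middle|\, \bar S_{1:t},\, s_0,a_0,\hat s_{1:T}\right).
\end{equation*}
I would then argue, using the forward-time Markov structure induced by the rollout, that each summand collapses to $\ent(\bar S_{t+1} \mid s_t,a_t,\hat s_{t+1})$. The point is that $\bar S_{t+1}$ is generated as $\bar\mu(s_t,a_t) + \bar L(s_t,a_t)\rvenvale$ with a fresh aleatoric noise $\rvenvale$; the propagated state $s_t$ (for $t\geq 1$, the realized Infoprop sample) and $a_t=\pi(\cdot\mid s_t)$ are measurable functions of quantities already frozen by the rollout history, and the noise streams driving the earlier $\bar S_{1:t-1}$, the earlier $\hat s_{1:t}$ and the future $\hat s_{t+2:T}$ are jointly independent of $\rvenvale$. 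Finally, by the Kalman-update construction carried out in Appendix~\ref{infoprop_transition}, the conditional law of $\bar S_{t+1}$ given $(s_t,a_t,\hat S_{t+1}=\hat s_{t+1})$ coincides by definition with the law of $\rvpropnextstate$, and the claim follows by summing $\ent(\bar S_{t+1}\mid s_t,a_t,\hat s_{t+1})=\ent(\rvpropnextstate)$ over $t$.

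The step I expect to be the main obstacle is the Markov-type reduction, since the conditioning also contains the strictly-future model observations $\hat s_{t+2:T}$ and one has to show that these do not tighten the posterior on $\bar S_{t+1}$. The cleanest route in the Gaussian setting is to write down the joint covariance of $(\bar S_{t+1},\hat S_{1:T})$ and read off a block-triangular structure induced by the time ordering: each $\hat S_s$ with $s>t+1$ is a deterministic function of later states and later, independent aleatoric/epistemic noise draws, so conditioning on $\hat s_{t+2:T}$ contributes no extra information beyond what $(s_t,a_t,\hat s_{t+1})$ already fixes. A secondary bookkeeping task is to keep the three noise streams --- aleatoric $\rvenvale$, epistemic $\rvmodepi$, and conditional $\rvcondun$ --- properly separated across the recursion, since at step $t$ one marginalizes over $\rvenvale$ for that step while conditioning on the already-observed $\hat S_{t+1}$, with the earlier $\rvcondun$'s playing the role of fixing the realized $s_{1:t}$.
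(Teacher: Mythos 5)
Your proposal follows essentially the same route as the paper's proof: the chain rule for the joint conditional entropy, followed by dropping the future model observations (the paper's step (a), ``causality'') and the non-adjacent past (step (b), the Markov property), and finally identifying each summand $\ent(\bar S_{t+1}\mid \rvgenstate=\genstate,\rvgenact=\genact,\rvmodnextstate=\modnextstate)$ with $\ent(\rvpropnextstate)$ via the definition of the Infoprop state. Your added discussion of why conditioning on $\hat s_{t+2:T}$ does not tighten the posterior is a more explicit justification of the causality step that the paper states without elaboration, but it is the same argument.
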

\begin{proof}
    \begin{equation}
    \begin{aligned}
& \ent \left(\bar{S}_1, \bar{S}_2, \dots, \bar{S}_T \vert {S}_0={s}_0, {A}_0={a}_0, \hat{S}_1=\hat{s}_1 \ldots\hat{S}_T=\hat{s}_T\right) \\
& =\sum_{t=0}^{T-1} \ent \left(\rvegtnextstate \mid \bar{S}_1, \bar{S}_2, \ldots, \rvegtstate, {S}_0={s}_0, {A}_0={a}_0, \hat{S}_1=\hat{s}_1 \ldots \hat{S}_T=\hat{s}_T\right)\\
&\overset{\mathrm{(a)}}{=}\sum_{t=0}^{T-1} \ent \left(\rvegtnextstate \mid \bar{S}_1, \bar{S}_2 \ldots \bar{S}_{t}, {S}_0={s}_0, {A}_0={a}_0, \hat{S}_1=\hat{s}_1, \ldots, \hat{S}_T=\hat{s}_t\right) \\
&\overset{\mathrm{(b)}}{=}\sum_{t=0}^{T-1} \ent \left(\rvegtnextstate \mid \rvgenstate=\genstate, \rvgenact =\genact, \rvmodnextstate=\modnextstate\right)\\
& = \sum_{t=0}^{T-1} \ent \left( \rvpropnextstate \right)
\end{aligned}
\end{equation}
where $\mathrm{(a)}$ follows from causality and $\mathrm{(b)}$ follows from the Markov property.
\label{lemma:information_loss}
\end{proof}

\newpage
\section{Experiments}
\label{app:exp}
\subsection{Experimental Setup}
\label{app:exp_setup}
We used Weights\&Biases \footnote{\url{https://wandb.ai/site}} for logging our experiments and run $5$ random seeds per experiment.

The respective hyperparameters for \algo on MuJoCo are given below. Table \ref{tab:model_params_mujoco} addresses model learning, Table \ref{tab:rollout_params_mujoco} the \roll mechanism, and Table \ref{tab:agent_params_mujoco} training the model-free agent.

\begin{table}[!h]
\caption{Hyperparameters used to train the model of \algo in the Mujoco Tasks.}
    \centering
    \begin{tabular}{ |c|c|c|c|c| } 
         \hline
         \textbf{Hyperparameter} &\textbf{Halfcheetah}  &\textbf{Walker} &\textbf{Hopper}&\textbf{Ant}\\ 
         \hline
           &  \multicolumn{4}{|c|}{}\\
         ensemble size $E$ &\multicolumn{4}{|c|}{7}\\ 
           &  \multicolumn{4}{|c|}{} \\
         \hline
           & \multicolumn{3}{|c|}{}& \\
         number of hidden neurons & \multicolumn{3}{|c|}{200} & 400\\ 
            & \multicolumn{3}{|c|}{}& \\
         \hline
         & \multicolumn{4}{|c|}{} \\
         number of hidden layers & \multicolumn{4}{|c|}{4} \\ 
            & \multicolumn{4}{|c|}{} \\
         \hline
         & & & &  \\
         learning rate & 0.0003 & 0.0006 & 0.0004 &0.001 \\ 
            & & & &  \\
         \hline
         & & & &  \\
         weight decay & 0.00005 & 0.0007 & 0.0008 &0.00002 \\ 
            & & & &  \\
         \hline
         & & & &  \\
         patience for early-stopping& 10 & 9 & 8 &9 \\ 
            & & & &  \\
         \hline
          &  \multicolumn{4}{|c|}{}\\
         retrain interval &\multicolumn{4}{|c|}{250 environment steps}\\ 
           &  \multicolumn{4}{|c|}{} \\
         \hline
    \end{tabular}
    \label{tab:model_params_mujoco}
\end{table}
\newpage
\begin{table}[!h]
\caption{Hyperparameters of the \roll rollouts in the Mujoco Tasks.}
    \centering
    \begin{tabular}{ |c|c|c|c|c| } 
         \hline
         \textbf{Hyperparameter} &\textbf{Halfcheetah}  &\textbf{Walker} &\textbf{Hopper}&\textbf{Ant}\\ 
         \hline
           &  \multicolumn{4}{|c|}{}\\
         accurate quantile $\zeta_1$ &\multicolumn{4}{|c|}{0.99}\\ 
           &  \multicolumn{4}{|c|}{} \\
         \hline
           &  \multicolumn{4}{|c|}{}\\
         exceptionally accurate quantile $\zeta_2$ &\multicolumn{4}{|c|}{0.01}\\ 
           &  \multicolumn{4}{|c|}{} \\
         \hline
         &  \multicolumn{4}{|c|}{}\\
         scaling factor $\xi$ &\multicolumn{4}{|c|}{100}\\ 
           &  \multicolumn{4}{|c|}{} \\
         \hline
         &  \multicolumn{4}{|c|}{}\\
         rollout interval &\multicolumn{4}{|c|}{250 environment steps}\\ 
           &  \multicolumn{4}{|c|}{} \\
         \hline
         &  \multicolumn{4}{|c|}{}\\
         rollout batch size &\multicolumn{4}{|c|}{100000}\\ 
           &  \multicolumn{4}{|c|}{} \\
         \hline
    \end{tabular}
    \label{tab:rollout_params_mujoco}
\end{table}

\begin{table}[!h]
\caption{Hyperparameters used to train the  SAC agent of \algo in the Mujoco Tasks.}
    \centering
    \begin{tabular}{ |c|c|c|c|c| } 
         \hline
         \textbf{Hyperparameter} &\textbf{Halfcheetah}  &\textbf{Walker} &\textbf{Hopper}&\textbf{Ant}\\ 
         \hline
           & \multicolumn{2}{|c|}{}& & \\
         number of hidden neurons & \multicolumn{2}{|c|}{1024} & 512 &1024\\ 
            & \multicolumn{2}{|c|}{}& &\\
         \hline
         & \multicolumn{4}{|c|}{} \\
         number of hidden layers & \multicolumn{4}{|c|}{2} \\ 
            & \multicolumn{4}{|c|}{} \\
         \hline
         & & & &  \\
         learning rate & 0.0003 & 0.0002 & 0.0004 &0.0005 \\ 
            & & & &  \\
         \hline
         & & & &  \\
         SAC target entropy & -6 & -7 & 1 &0 \\ 
            & & & &  \\
         \hline
         & & & &  \\
         target update interval & 1 & 4 & 6 &5 \\ 
            & & & &  \\
         \hline
          & \multicolumn{3}{|c|}{}  & \\
         update steps $G$ & \multicolumn{3}{|c|}{10} & 20 \\ 
           & \multicolumn{3}{|c|}{}  &  \\
        \hline
    \end{tabular}
    \label{tab:agent_params_mujoco}
\end{table}

The results for SAC, MBPO and MACURA are obtained from \cite{Frauenknecht2024May}.

\subsection{Prediction Quality}
\label{app:exp_pred}
We provide additional results for the rollout consistency experiments introduced in Section \ref{sec:exp_prediction}.
Figure \ref{fig:100steps_all} depicts model-based rollouts for the $10^\text{th}$ dimension of hopper under MBPO, MACURA and \algo when setting the maximum rollout length of all approaches to $100$. In the original experiment depicted in Figure \ref{fig:100step_pred} the maximum rollout length was $11$ for MBPO and $10$ for MACURA, following the hyperparameter settings reported in the respective publications \cite{Janner2019Dec, Frauenknecht2024May}.
\begin{figure}[tb]
     \centering
         \includegraphics[width=\textwidth]{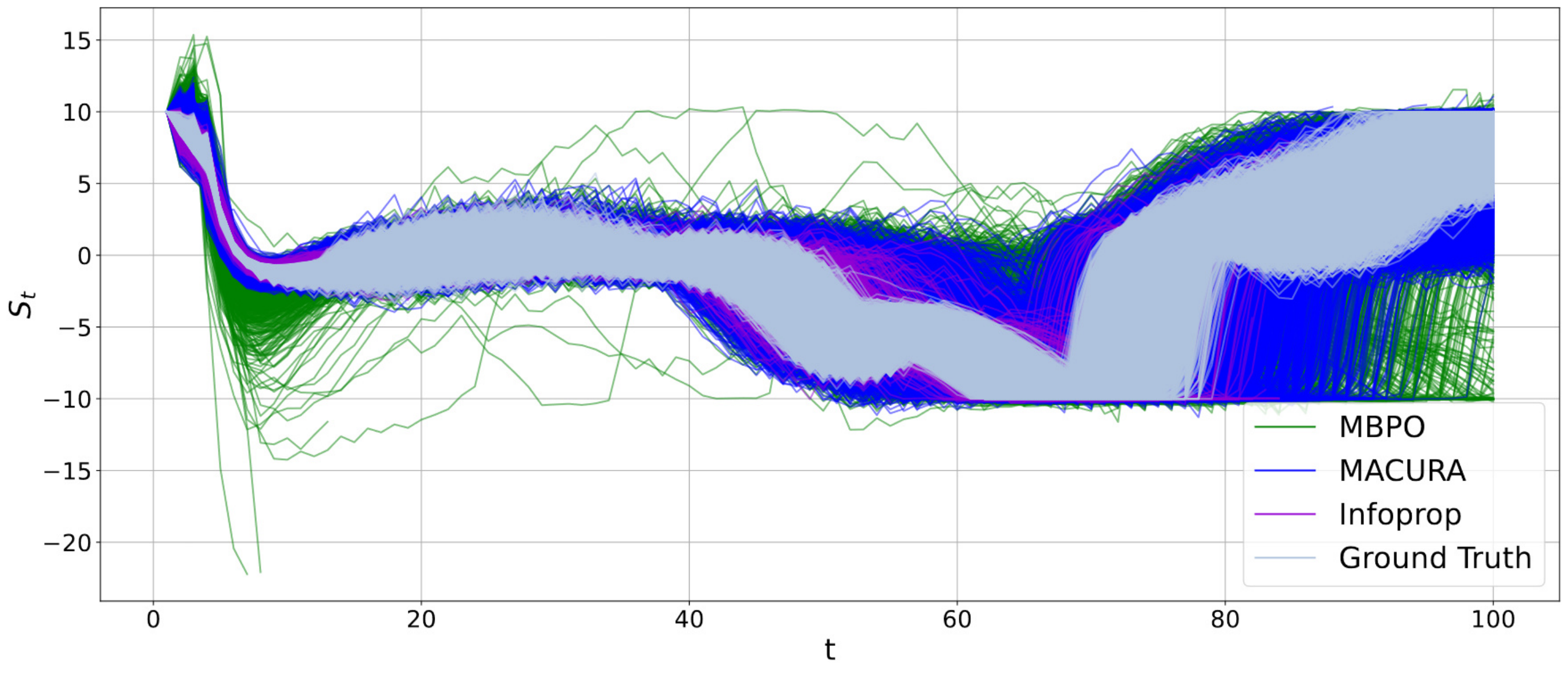}
         \caption{Rollout consistency MBPO vs. MACURA vs. \algo for $100$ steps. \textit{Comparison of the respective rollout mechanisms similar to Figure \ref{fig:100step_pred} but with a maximum rollout length of $100$ for all approaches.}}
         \label{fig:100steps_all}
 \end{figure}
 
 We observe a vastly spread distribution of MBPO rollouts, as every rollout is propagated for $100$ steps, irrespective of model uncertainty, as long as it does not reach a terminal state of the hopper task. MACURA rollouts have an improved consistency compared to MBPO, especially in the beginning of the rollouts. Over long horizons, however, the TS propagation mechanism and the single-step termination criterion cannot produce consistent data. In contrast, \algo is able to propagate consistent rollouts over long horizons.

\subsection{Performance on Humanoid}
\label{app:exp_limits}
Figure \ref{fig:humanoid_ret} depicts the return on MuJoCo humanoid. We observe instabilities in the performance of \algo towards the end of training. We assume this occurs due to overfitting and plasticity loss in the critic of the model-free learner  \cite{Nikishin2022ThePB, DOro2023SampleEfficientRL}. This is reflected in the peaking critic loss depicted in Figure \ref{fig:humanoid_cl} concurrently with the performance drops. We set the update ratio $G$ (see Algorithm \ref{alg:infoprop_dyna}) to a relatively low value of $10$ which explains the slower learning behavior than MACURA. For higher values of $G$, instabilities occur even earlier in the training process, underscoring our assumption of overfitting critics. 
\begin{figure}[tb]
     \centering
         \includegraphics[width=\textwidth]{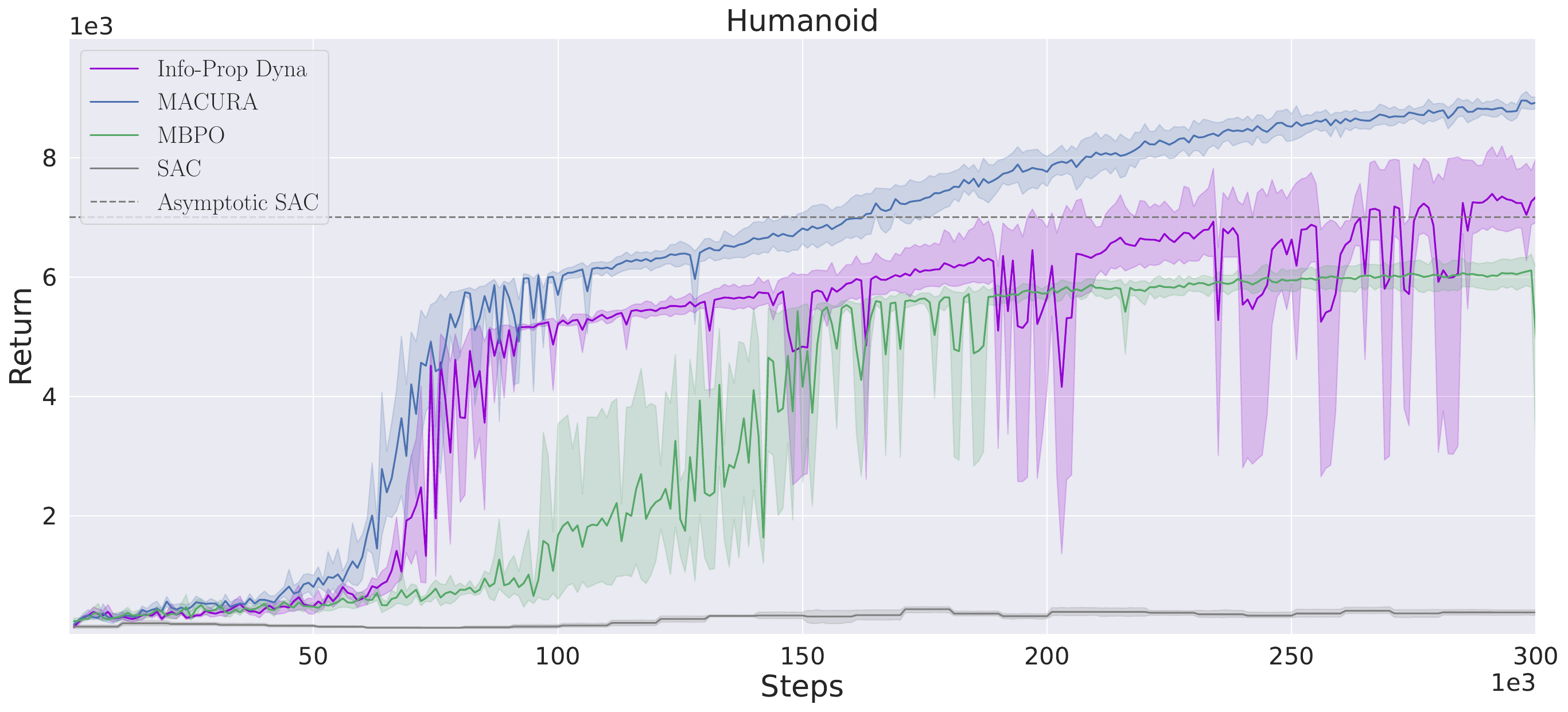}
         \caption{Performance on Humanoid}
         \label{fig:humanoid_ret}
 \end{figure}
 
Model rollout inconsistency does not appear to be the destabilizing factor, as rollout data is consistent with the environment distribution as depicted in Figure \ref{fig:humanoid_rl} and the rollout adaption mechanism seems to react to policy shifts induced by high critic losses through reducing the average rollout length as depicted in Figure \ref{fig:humanoid_rl}.

\begin{figure}[tb]
     \centering
         \includegraphics[width=\textwidth]{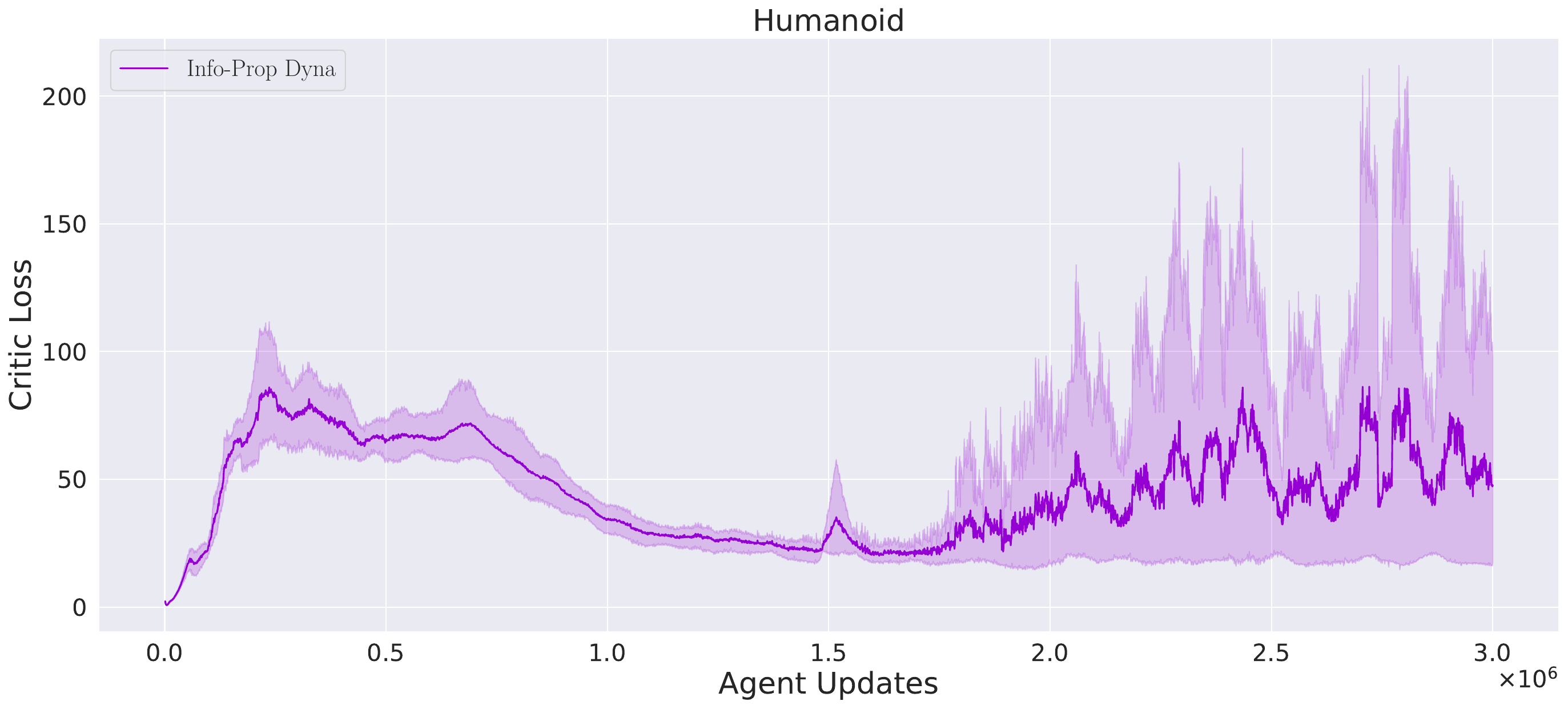}
         \caption{Critic Loss on Humanoid}
         \label{fig:humanoid_cl}
 \end{figure}
\begin{figure}[tb]
     \centering
         \includegraphics[width=\textwidth]{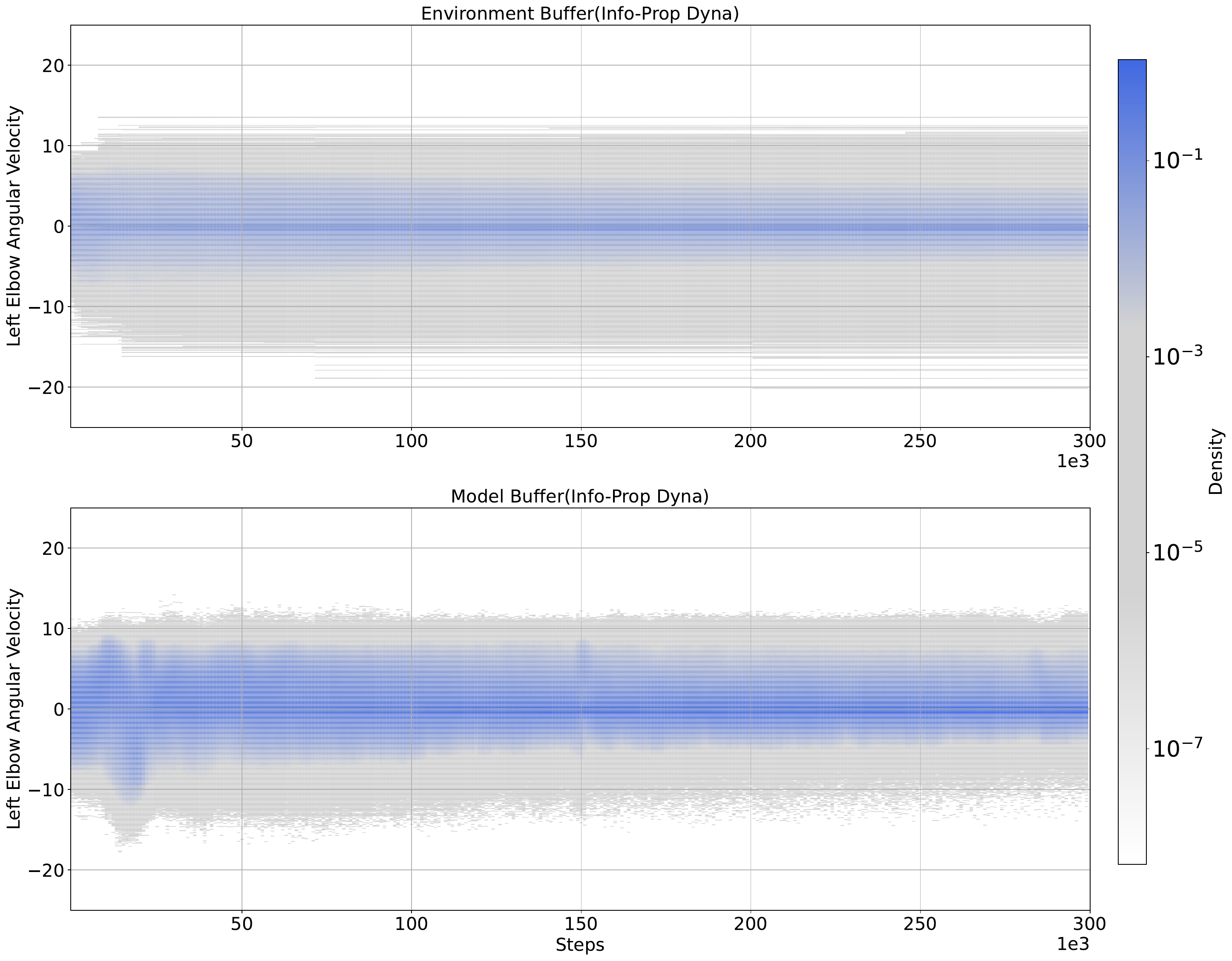}
         \caption{Comparison between $\buffenv$ and $\buffmod$ for the $45^\text{th}$ dimension of Humanoid}
         \label{fig:humanoid_buff}
 \end{figure}
\begin{figure}[tb]
     \centering
         \includegraphics[width=\textwidth]{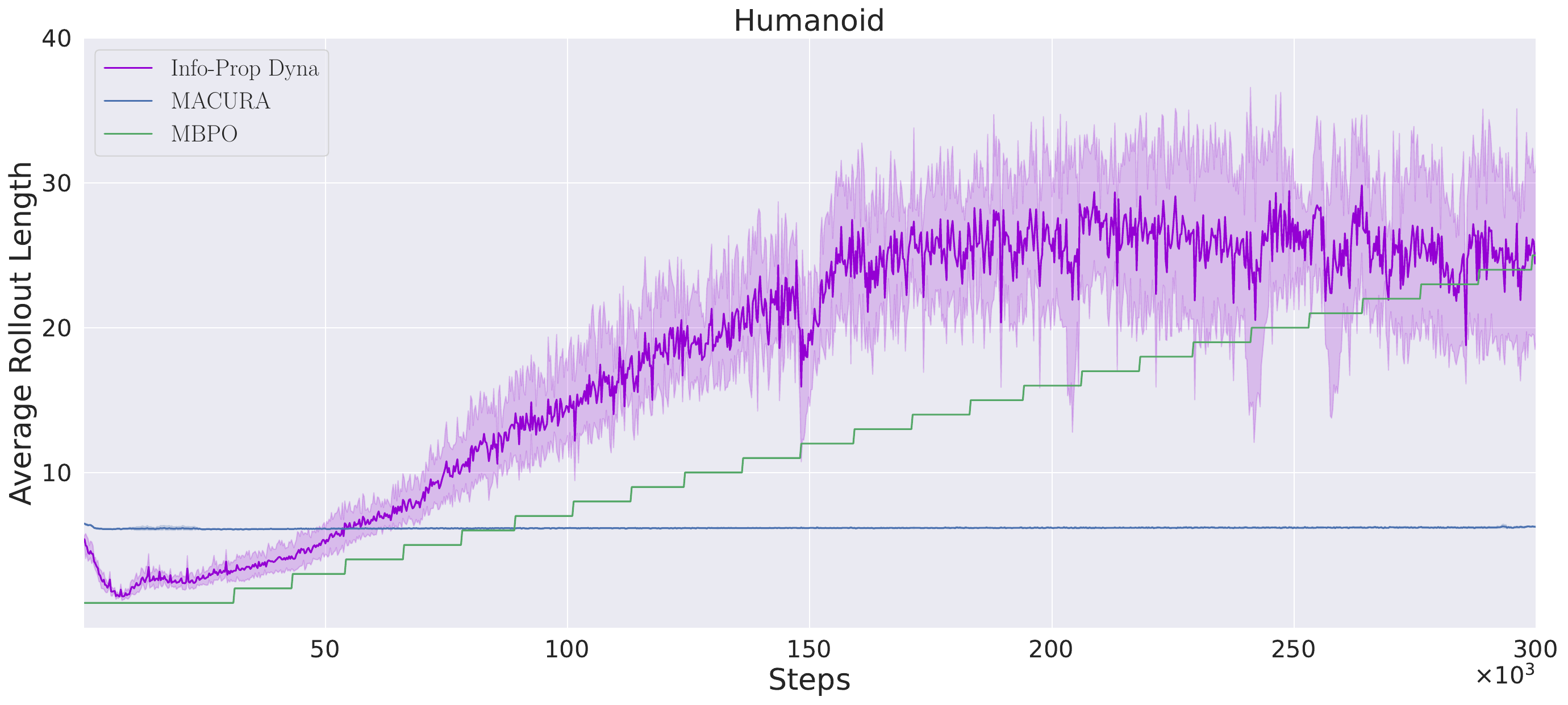}
         \caption{Average Rollout Length on Humanoid}
         \label{fig:humanoid_rl}
 \end{figure}

\subsection{Investigating Instabilities in Learning}
\label{subes:lean_inst}
Although Infoprop gives better quality data over longer rollout horizons than TS rollouts, we observe instabilities in learning when naively integrating \roll into the conventional Dyna setting. We hypothesize that the main cause of these instabilities is due to the agent overfitting to the higher quality data produced by \roll rollouts, followed by loss of plasticity \cite{Nikishin2022ThePB, DOro2023SampleEfficientRL}. To investigate this, we carried out an ablation by varying the values of $\zeta_1$, which we introduced in Equation \ref{eq:thresh_single}. This hyperparameter controls the size of the subset $\accsp$ where the model is considered sufficiently accurate. The smaller the value of $\zeta_1$, the more aggressive the filtering of single-step information losses, leading to a smaller $\accsp$. 
\begin{figure}[tb]
     \centering
         \includegraphics[width=\textwidth]{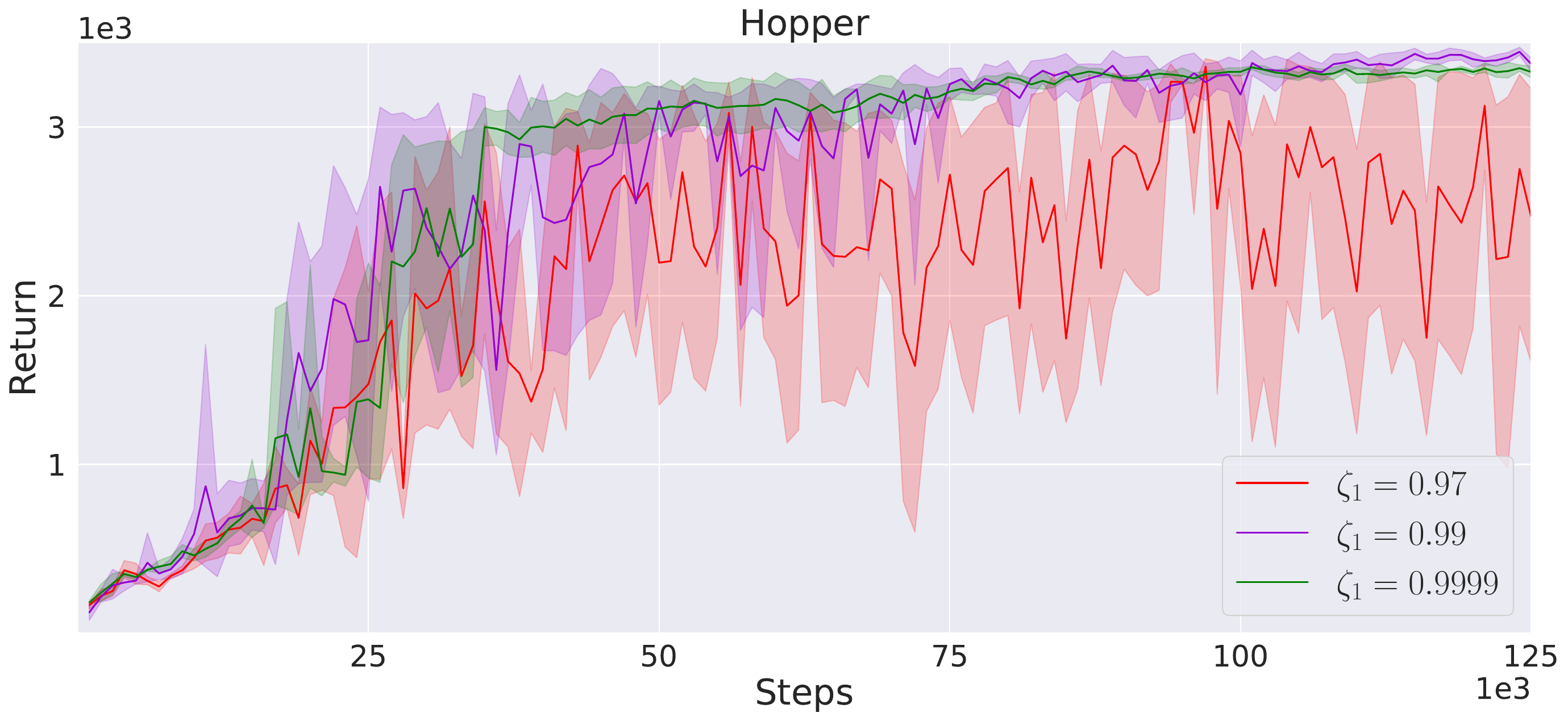}
         \caption{Ablation Study on Hopper.}
         \label{fig:zeta_ablation}
 \end{figure}
 
Figure \ref{fig:zeta_ablation} shows the returns obtained on the Hopper task for three values of $\zeta_1$. For $\zeta_1=0.97$, we see that the returns are unstable throughout training, even though this setting gives the best quality data. On the other hand, $\zeta_1=0.9999$ produces a more stable learning curve compared to $\zeta_1=0.99$, which was used for all the experiments in Section \ref{sec:experiments}. This shows that better data quality does not necessarily lead to better training performance since if that was the case, $\zeta_1=0.97$ would have produced the best performance. A similar observation is reported in \cite{Frauenknecht2024May}, where low values of the scaling factor $\xi$, corresponding to accurate model rollouts, led to instabilities in learning.
\begin{figure}[tb]
     \centering
         \includegraphics[width=\textwidth]{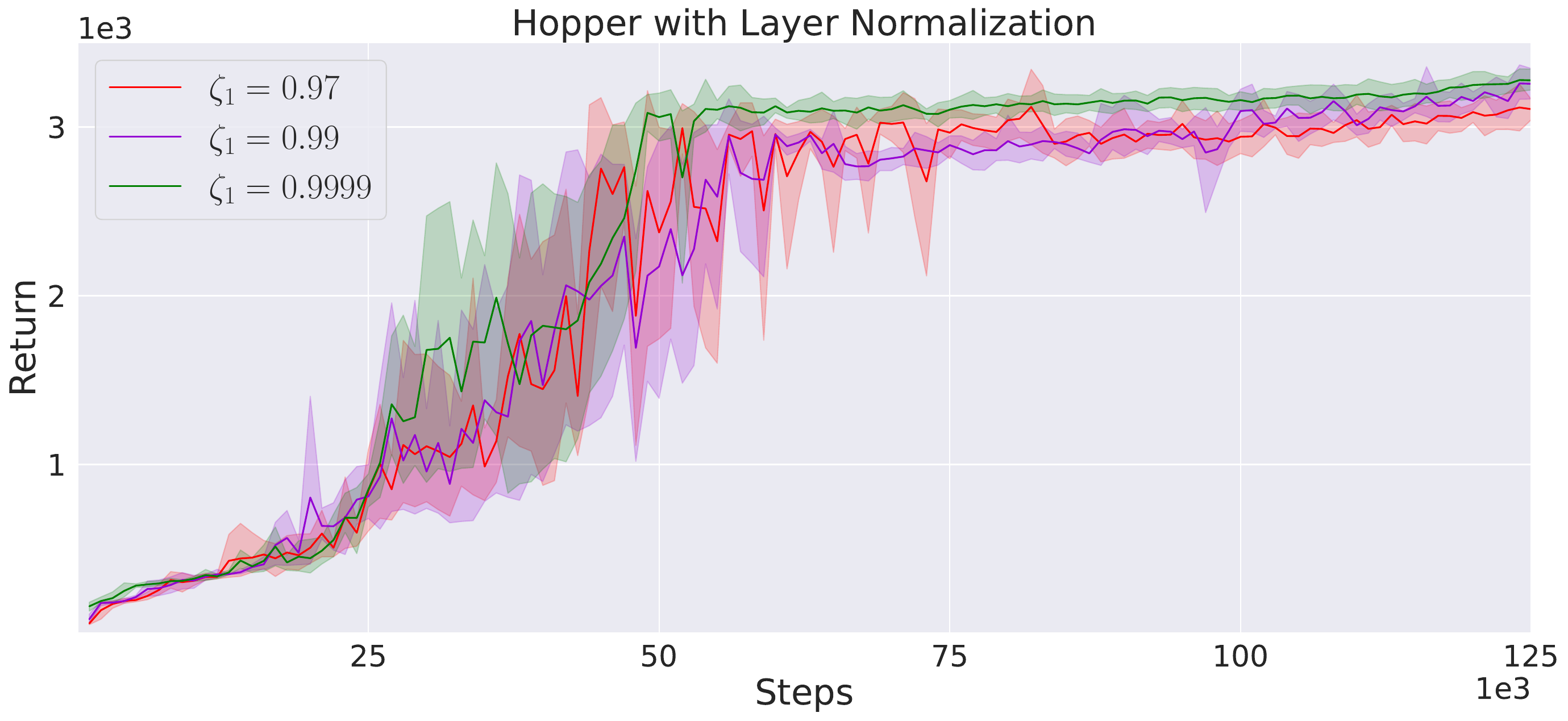}
         \caption{Ablation Study on Hopper with layer normalization.}
         \label{fig:zeta_ablation_layernorm}
 \end{figure}

Our observations show that producing high-quality synthetic data in the conventional Dyna setting leads to issues seen in MFRL when using a high update-to-data (UTD) ratio. There have been recent works on regularization methods to counteract agent overfitting and loss of plasticity. One such approach is applying layer normalization \cite{Smith2022Aug, Nauman2024Jul}. Figure \ref{fig:zeta_ablation_layernorm} shows the same settings as in Figure \ref{fig:zeta_ablation} but with layer normalization applied to the critic and actor networks. It can be seen that even for $\zeta_1=0.97$, the learning is stable. 

The primary aim of this paper is to introduce the conceptual framework of the \roll rollout, as well as show its application to MBRL. Hence, we do not spend additional effort on tuning the hyperparameters or adding regularizations since this takes us away from the main objective. We defer such enhancements for future work.

\end{document}